\renewcommand{\arraystretch}{1.3}
\newcommand{\mset}[1]{\left\{\kern-.5em\left\{ #1 \right\}\kern-.5em\right\}}
\newcommand{\mmset}[1]{\{\kern-.4em\{ #1 \}\kern-.4em\}}
\newcommand{\vertiii}[1]{{\left\vert\kern-0.25ex\left\vert\kern-0.25ex\left\vert #1 
    \right\vert\kern-0.25ex\right\vert\kern-0.25ex\right\vert}}
\newcommand{\norm}[1]{\left\Vert#1\right\Vert}
\newcommand{\abs}[1]{\left\vert#1\right\vert}
\newcommand{\set}[1]{\left\{#1\right\}}
\newcommand{\parr}[1]{\left (#1\right )}
\newcommand{\brac}[1]{\left [#1\right ]}
\newcommand{\ip}[1]{\left \langle #1 \right \rangle }
\newcommand{\Real}{\mathbb R}
\newcommand{\Nat}{\mathbb N}
\newcommand{\too}{\rightarrow}
\newcommand{\diag}{\textrm{diag}} 
\newcommand{\one}{\mathbf{1}}
\newcommand{\eg}{{e.g.}}
\newcommand{\ie}{{i.e.}}
 \newtheorem{theorem}{Theorem}
 \newtheorem{proposition}{Proposition}
\def\eqref#1{equation~\ref{#1}}
\def\1{\bm{1}}
\def\vmu{{\bm{\mu}}}
\def\veta{{\bm{\eta}}}
\def\va{{\bm{a}}}
\def\vb{{\bm{b}}}
\def\ve{{\bm{e}}}
\def\vq{{\bm{q}}}
\def\vr{{\bm{r}}}
\def\vt{{\bm{t}}}
\def\vu{{\bm{u}}}
\def\vw{{\bm{w}}}
\def\vx{{\bm{x}}}
\def\vec1{{\bm{1}}}
\def\mB{{\bm{B}}}
\def\mC{{\bm{C}}}
\def\mD{{\bm{D}}}
\def\mI{{\bm{I}}}
\def\mQ{{\bm{Q}}}
\def\mR{{\bm{R}}}
\def\mS{{\bm{S}}}
\def\mU{{\bm{U}}}
\def\mV{{\bm{V}}}
\def\mW{{\bm{W}}}
\def\mX{{\bm{X}}}
\def\mY{{\bm{Y}}}
\def\mZ{{\bm{Z}}}
\DeclareMathAlphabet{\mathsfit}{\encodingdefault}{\sfdefault}{m}{sl}
\SetMathAlphabet{\mathsfit}{bold}{\encodingdefault}{\sfdefault}{bx}{n}
\def\gF{{\mathcal{F}}}
\def\gL{{\mathcal{L}}}
\def\gN{{\mathcal{N}}}
\def\gX{{\mathcal{X}}}
\newcommand{\dist}{\textrm{d}} 
\renewcommand\AB@affilsepx{, \protect\Affilfont}
\crefname{section}{Sec.}{Secs.}
\Crefname{section}{Section}{Sections}
\Crefname{table}{Table}{Tables}
\crefname{table}{Tab.}{Tabs.}
\begin{document}

\title{Frame Averaging for Equivariant Shape Space Learning }

\author[1,2]{Matan Atzmon \thanks{Work done during an internship at NVIDIA.} }
\author[1]{Koki Nagano}
\author[1,3,4]{Sanja Fidler}
\author[1]{Sameh Khamis}
\author[2]{Yaron Lipman}
\affil[1]{NVIDIA}
\affil[2]{Weizmann Institute of Science}
\affil[3]{University of Toronto}
\affil[4]{Vector Institute}

\maketitle

\begin{abstract}

    The task of shape space learning involves mapping a train set of shapes to and from a latent representation space with good generalization properties. Often, real-world collections of shapes have symmetries, which can be defined as transformations that do not change the essence of the shape. A natural way to incorporate symmetries in shape space learning is to ask that the mapping to the shape space (encoder) and mapping from the shape space (decoder) are equivariant to the relevant symmetries. 
    In this paper, we present a framework for incorporating equivariance in encoders and decoders by introducing two contributions: (i) adapting the recent Frame Averaging (FA) framework for building generic, efficient, and maximally expressive Equivariant autoencoders;  and (ii) constructing autoencoders equivariant to piecewise Euclidean motions applied to different parts of the shape. To the best of our knowledge, this is the first fully piecewise Euclidean equivariant autoencoder construction. Training our framework is simple: it uses standard reconstruction losses, and does not require the introduction of new losses. Our architectures are built of standard (backbone) architectures with the appropriate frame averaging to make them equivariant. Testing our framework on both rigid shapes dataset using implicit neural representations, and articulated shape datasets using mesh-based neural networks show state of the art generalization to unseen test shapes, improving relevant baselines by a large margin. In particular, our method demonstrates significant improvement in generalizing to unseen articulated poses.

    \end{abstract}
    
    \section{Introduction}
    
    
    Learning a shape space is the task of finding a latent representation to a collection of input training shapes that generalizes well to unseen, test shapes. This is often done within an autoencoder framework, namely an \emph{encoder} $\Phi:X\too Z$, mapping an input shape in $X$ (in some 3D representation) to the latent space $Z$, and a \emph{decoder} $\Psi:Z\too Y$, mapping latent representations in $Z$ back to shapes $Y$ (possibly in other 3D representation than  $X$). 
    
    Many shape collections exhibit \emph{symmetries}. That is, transformations that do not change the essence of the shape. For example, applying an Euclidean motion (rotation, reflection, and/or translation) to a rigid object such as a piece of furniture will produce an equivalent version of the object. Similarly, the same articulated body, such as an animal or a human, can assume different poses in space. 
    
    A natural way to incorporate symmetries in shape space learning is to require the mapping to the latent space, i.e., the encoder, and mapping from the latent space, i.e., the decoder, to be equivariant to the relevant symmetries. That is, applying the symmetry to an input shape and then encoding it would result in the same symmetry applied to the latent code of the original shape. Similarly, reconstructing a shape from a transformed latent code will result in a transformed shape. 
    
    The main benefit in imposing equivariance in shape space learning is achieving a very useful inductive bias: If the model have learned a single shape, it can already generalize perfectly to all its symmetric versions! 
    Unfortunately, even in the presumably simpler setting of a global Euclidean motion, building an equivariant neural network that is both expressive and efficient remains a challenge. The only architectures that were known to be universal for Euclidean motion equivariant functions are Tensor Field Networks \cite{thomas2018tensor,dym2020universality} and group averaging \cite{yarotsky2021universal,chen2021equivariant} both are computationally and memory intensive. Other architectures, \eg, Vector Neurons \cite{deng2021vector} are efficient computationally but are not known to be universal. 
    
    In this paper, we present a novel framework for building equivariant encoders and decoders for shape space learning that are flexible, efficient and maximally expressive (\ie, universal). In particular, we introduce two contributions: (i) we adapt the recent Frame Averaging (FA) framework \cite{puny2021frame} to shape space learning, showing how to efficiently build powerful shape autoencoders. The method is general, easily adapted to different architectures and tasks, and its training only uses standard autoencoder reconstruction losses without requiring the introduction of new losses. (ii) We construct what we believe is the first autoencoder architecture that is fully equivariant to piecewise Euclidean transformations of the shape's parts, \eg, articulated human body. 
    
    We have tested our framework on two types of shape space learning tasks: learning implicit representations of shapes from real-life input point clouds extracted from sequences of images \cite{reizenstein2021common}, and learning mesh deformations of human (body and hand) and animal shape spaces \cite{dfaust:CVPR:2017,zuffi20173d,mahmood2019amass,akhter2015pose}. In both tasks, our method produced state of the art results when compared to relevant baselines, often showing a big margin compared to the runner-up, justifying the efficacy of the inductive bias injected using the frame-averaging and equivariance.  
    
    
    
    \section{Related work}
    \paragraph{Euclidean equivariant point networks.}
    Original point cloud networks, such as PointNet \cite{qi2017pointnet,qi2017pointnet++}, PCNN \cite{atzmon2018point}, PointCNN \cite{li2018pointcnn}, Spider-CNN \cite{xu2018spidercnn}, and DGCNN \cite{wang2019dynamic} are permutation equivariant but not Euclidean equivariant. Therefore, these architectures often struggle to generalize over translated and/or rotated inputs. Realizing that Euclidean equivariance is a useful inductive bias, much attention has been given to develop Euclidean equivariant point cloud networks. 
    Euclidean \emph{invariance} can be achieved by defining the network layers in terms of distances or angles  between points ~\cite{deng2018ppf,zhang2019rotation} or   angles and distances measured from the input point cloud's normals \cite{gojcic2019perfect}.
    Other works encode local neighborhoods using some local or global coordinate system to achieve invariance to rotations and translations. \cite{xiao2020endowing,yu2020deep,deng2018ppf} use PCA to define rotation invariance. 
    %
    %
    %
    %
    Equivariance is a desirable property for autoencoders. 
    Some works use representation theory of the rotation group (\eg, spherical harmonics) to build rotational equivariant networks \cite{worrall2017harmonic,liu2018deep,weiler20183d}. Tensor Field Networks (TFN) \cite{thomas2018tensor,fuchs2020se,romero2020group} achieve equivariance to both translation and rotation. However, TFN architectures are tailored to rotations and require high order features for universality  \cite{dym2020universality}. 
    Recently~\cite{deng2021vector} proposed a rotation equivariant network encoding features using the first two irreducible representations of the rotation group (tensor features) and constructed linear equivariant layers between features as well as equivariant non-linearities. This architecture is not proven universal. 
    Another method achieving Euclidean equivariance is by group averaging or convolutions \cite{yarotsky2021universal}. \cite{esteves2018learning,cohen2018spherical} use spherical convolution to achieve rotation or Euclidean equivariance. \cite{chen2021equivariant} suggests to average of the 6D Euclidean group. Recently, \cite{puny2021frame} suggest Frame Averaging (FA) as a general purpose methodology for building equivariant architectures that are maximally expressive and often offer a much more efficient computation than group representation or averaging techniques.
    
    \vspace{-5pt}
    \paragraph{Implicit shape space learning.}
    Learning neural implicit representations from input point clouds is done by regressing signed distance function to the surface \cite{park2019deepsdf} or occupancy probabilities \cite{chen2019learning,mescheder2019occupancy}. The input point cloud is usually encoded in the latent space using a PointNet-like encoder \cite{qi2017pointnet,zaheer2017deep} or autodecoder \cite{park2019deepsdf}.  \cite{atzmon2020sal,atzmon2020sald} regress the unsigned distance to the input point clouds avoiding the need of implicit function supervision for training. Normal data and gradient losses can be used to improve training and fidelity of learned implicits \cite{gropp2020implicit,sitzmann2020implicit,atzmon2020sald,lipman2021phase}. Higher spatial resolution was achieved by using spatially varying latent codes \cite{peng2020convolutional,chibane2020implicit}. The above works did not incorporate Euclidean equivariance. As far as we are aware, \cite{deng2021vector} are the first to incorporate Euclidean equivariance in the implicit shape space learning framework. 
    
    Implicit representations are generalized to deformable and articulated shapes by composing the implicit representation with some backward parametric deformation such as Linear Blend Skinning (LBS) \cite{jeruzalski2020nilbs,Saito:CVPR:2021,mihajlovic2021leap}, displacement and/or rotation fields \cite{park2020deformable,pumarola2021d} and  flows \cite{niemeyer2019occupancy,atzmon2021augmenting}. NASA \cite{deng2020nasa} suggest to combine a collection of deformable components represented using individual occupancy networks sampled after reversing the Euclidean transformation of each component. SNARF \cite{chen2021snarf} applies approximated inverse of LBS operator followed by an occupancy query. Both NASA and SNARF work on a single shape and do not learn the latent representation of pose.  \vspace{-10pt}

    \paragraph{Mesh shape space learning.} Mesh shape spaces are often represented as coordinates assigned to a fixed template mesh and GNNs are used to learn their coordinates and latent representations \cite{litany2018deformable,verma2018feastnet,jiang2020disentangled,huang2021arapreg}. \cite{kostrikov2018surface} adapt GNNs to surfaces advocating the dirac operator transferring information from nodes to faces and vice versa.  \cite{litany2018deformable,jiang2020disentangled} use Variational AutoEncoders (VAEs) to improve generalization. The most recent and related work to ours in this domain is \cite{huang2021arapreg} that suggests to incorporate  As-Rigid-As-Possible (ARAP) \cite{sorkine2007rigid,wand2007reconstruction} deformation loss to encourage Euclidean motions of local parts of the shape.

    \section{Method}
    \subsection{Preliminaries: Group action}
    In this work, we consider vector spaces for representing shape and feature spaces. In the following, we define these vector spaces in general terms and specify how the different symmetry groups are acting on them. We use capital letters to represent vector spaces, \eg, $V,W,X,Y,Z$. We use two types of vector spaces: i) $\Real^{a+b\times 3}$, where $a,b\in\Nat_{\geq 0}$ are the invariant and equivariant dimensions,  respectively, and ii) $C^1(\Real^3)$, the space of continuously differentiable scalar volumetric functions. 
    The symmetry considered in this paper is the group of Euclidean motions in $\Real^3$, denoted $E(3)=O(3)\ltimes \Real^3$, where $O(3)$ is the orthogonal matrix group in $\Real^{3\times 3}$. We represent elements in this group as pairs $g=(\mR,\vt)$, where $\mR\in O(3)$ and $\vt\in\Real^3$, where by default vectors are always column vectors.

    The \emph{action} of $G$ on a vector space $V$, denoted $\rho_V$, is defined as follows. First, for $\mV=(\vu,\mU)\in V = \Real^{a+b\times 3}$, consisting of an invariant part $\vu \in \Real^a$, and  equivariant part $\mU \in \Real^{b \times 3}$, we define the action simply by applying the transformation to the equivariant part:
    \begin{equation}\label{e:rho_1}
        \rho_V(g)\mV = (\vu, \mU\mR^T + \one \vt^T)
    \end{equation}
    where $g=(\mR,\vt)\in E(3)$ and $\one\in \Real^{b}$ is the vector of all ones. Second, for $f\in V =  C^1(\Real^3)$ we define the action using change of variables:
    \begin{equation}\label{e:rho_2}
        (\rho_V(g)f)(\vx) = f(\mR^T(\vx-\vt))
    \end{equation}
    for all $\vx\in\Real^3$ and $g=(\mR,\vt)\in G$.


    \subsection{Shape spaces and equivariance}
    We consider an input shape space $X$, a latent space $Z$, and output shape space $Y$, representing shapes in $\Real^3$. All three spaces $X,Z,Y$ are vector spaces as described above, each endowed with an action (using either \eqref{e:rho_1} or \ref{e:rho_2}) of the Euclidean group $G=E(3)$, denoted $\rho_X,\rho_Z,\rho_Y$, respectively. 
    

    Our goal is to learn an encoder $\Phi:X\too Z$, and decoder $\Psi:Z\too Y$ that are \emph{equivariant}. 
    Namely, given an $E(3)$-transformed input, $\rho_X(g)\mX$ we would like its latent code to satisfy
    \begin{equation}
     \Phi(\rho_X(g)\mX)=\rho_Z(g)\Phi(\mX),  
    \end{equation}
    and its reconstruction to satisfy 
    \begin{equation}
     \Psi(\rho_Z(g)\mZ)=\rho_Y(g) \Psi(\mZ).   
    \end{equation}
    Such $X,Z,Y$ are called \emph{steerable} spaces \cite{cohen2016steerable}. The following commutative diagram summarizes the interplay between the encoder, decoder, and the actions of the transformation group:
    \begin{equation*}
        \begin{tikzcd}
    X \arrow{r}{\Phi} \arrow[swap]{d}{\rho_X(g)} & Z \arrow{r}{\Psi} \arrow[swap]{d}{\rho_Z(g)} & \arrow[swap]{d}{\rho_Y(g)} Y \\%
    X \arrow{r}{\Phi}& Z  \arrow{r}{\Psi} & Y
    \end{tikzcd}
    \end{equation*}
    
    \subsection{Frame averaging}\label{ss:frame_averaging}
    We will use  Frame Averaging (FA) \cite{puny2021frame} to build $\Phi,\Psi$. FA allows to build both computationally efficient and maximally expressive equivariant networks.  A \emph{frame} is a map $\gF:V\too 2^G\setminus \emptyset$. That is, for each element $\mV\in V$ it provides a non-empty subset of the group $G=E(3)$, $\gF(\mV)\subset G$. The frame $\gF$ is called \emph{equivariant} if it satisfies 
    \begin{equation}\label{e:frame_equi}
        \gF(\rho_V(g)\mV)=g\gF(\mV)
    \end{equation}
    for all $g\in G$, $\mV\in V$, where for a set $A\subset G$ we define (as usual) $gA=\set{ga\, \vert\, a\in A}$, and the equality in \eqref{e:frame_equi} should be understood in the sense of sets. Then, as shown in \cite{puny2021frame}, an arbitrary map $\phi:V\too W$ can be made equivariant by averaging over an equivariant frame:
    \begin{equation}\label{e:fa}
        \ip{\phi}_\gF(\mV) = \frac{1}{|\gF(\mV)|}\sum_{g\in \gF(\mV)} \rho_W(g) \phi \parr{ \rho_V(g)^{-1} \mV}.
    \end{equation}
    The operator $\ip{\cdot}_\gF$ is called Frame Averaging (FA). An alternative to FA is full group averaging \cite{yarotsky2021universal,chen2021equivariant}, that amounts to replacing the sum over $\gF(\mV)$ in \eqref{e:fa} with an integral over $G$. Full group averaging also provides equivariance and universality. The crucial benefit in FA, however, is that it only requires averaging over a small number of group elements without sacrificing expressive power. In contrast, averaging over the entire group $E(3)$ requires approximating a 6D integral (with an unbounded translation part). Therefore, it can only be approximated and is memory and computationally intensive \cite{chen2021equivariant}. 

    \paragraph{Frame construction.} 
    All the frames we use in this paper are of the form $\gF:V\too 2^G\setminus \emptyset$, for $V=\Real^{d\times 3}$, $G=E(3)$, with the action defined as in \eqref{e:rho_1}.  In some cases we further assume to have some non-negative weight vector $\vw=(w_1,\ldots,w_d)\in\Real^d_{{\scriptscriptstyle \geq 0}}$.
    Given $\mV\in V=\Real^{d\times 3}$ we define $\gF(\mV)\subset E(3)$ using weighted PCA, as follows. First,  
    \begin{equation}\label{e:t}
     \vt = \frac{1}{\one^T\vw}\mV^T \vw   
    \end{equation}
    is the weighted centroid. The covariance matrix is in $\Real^{3\times 3}$ $$\mC = (\mV-\one\vt^T)^T\mathrm{diag}(\vw)(\mV-\one\vt^T),$$
    where $\mathrm{diag}(\vw)\in\Real^{d\times d}$ is a diagonal matrix with $\vw$ along its main diagonal. In the generic case (which we assume in this paper) no eigenvalues of $\mC$ are repeating, \ie, $\lambda_1<\lambda_2<\lambda_3$ (for justification see \eg,  \cite{breiding2018geometry}). Let $\vr_1,\vr_2,\vr_3$ be the corresponding eigenvectors.  
    The frame is defined by $\gF(\mV)=\set{(\mR,\vt)\ \vert \ \mR=\brac{\pm\vr_1,\pm\vr_2,\pm\vr_3}}$, which contains $2^3=8$ elements. Intuitively, $\mV$ is a point cloud in $\Real^3$ and its frame, $\gF(\mV)$, contains all Euclidean motions that take the origin to the weighted centroid of $\mV$ and the axes to the weighted principle directions. The proof of the following proposition is in the supplementary.
    \begin{proposition}\label{prop:frame_equi}
    The frame $\gF$ is equivariant.
    \end{proposition}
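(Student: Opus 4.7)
The plan is to unfold the definitions: take an arbitrary $g=(\mR,\vt)\in E(3)$ and $\mV\in V=\Real^{d\times 3}$, form the transformed $\mV'=\rho_V(g)\mV=\mV\mR^T+\one\vt^T$, and explicitly compute the frame $\gF(\mV')$ and compare it with $g\gF(\mV)$ as sets in $E(3)$ (using the usual semidirect-product multiplication $(\mR_1,\vt_1)(\mR_2,\vt_2)=(\mR_1\mR_2,\mR_1\vt_2+\vt_1)$).

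First I would show the centroid transforms equivariantly, namely $\vt_{\mV'}=\mR\vt_{\mV}+\vt$, by plugging $\mV'$ into equation \eqref{e:t} and using linearity. Next I would compute the centered matrix $\mV'-\one\vt_{\mV'}^T=(\mV-\one\vt_{\mV}^T)\mR^T$, from which it follows immediately that the covariance transforms as $\mC'=\mR\,\mC\,\mR^T$. Under the generic assumption $\lambda_1<\lambda_2<\lambda_3$ on the spectrum, each eigenspace is one-dimensional, and since $\mC'(\mR\vr_i)=\mR\mC\vr_i=\lambda_i(\mR\vr_i)$, the unit eigenvectors of $\mC'$ are exactly $\{\pm\mR\vr_i\}_{i=1}^{3}$.

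Putting this together, the frame of $\mV'$ becomes
\begin{equation*}
\gF(\mV')=\bigl\{([\pm\mR\vr_1,\pm\mR\vr_2,\pm\mR\vr_3],\,\mR\vt_{\mV}+\vt)\bigr\},
\end{equation*}
while on the other side
\begin{equation*}
g\,\gF(\mV)=\bigl\{(\mR[\pm\vr_1,\pm\vr_2,\pm\vr_3],\,\mR\vt_{\mV}+\vt)\bigr\},
\end{equation*}
and these two sets of $8$ elements are term-by-term identical, establishing \eqref{e:frame_equi}.

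The only subtle point is the eigenvector identification: a priori eigenvectors of a symmetric matrix are only defined up to sign, so the bijection between frames must account for this, but since the frame is by construction closed under the sign flips $\pm\vr_i$, the $\pm$ ambiguity is absorbed on both sides simultaneously and the set equality holds cleanly. The distinct-eigenvalue assumption is essential here so that no larger rotational ambiguity appears within an eigenspace; everything else is a direct algebraic verification.
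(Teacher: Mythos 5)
Your proof is correct and takes essentially the same route as the paper's: both verify that the weighted centroid maps to $\mR\vt_{\mV}+\vt$ and that the weighted covariance conjugates to $\mR\mC\mR^{T}$, so its eigenvectors become $\pm\mR\vr_i$ and the eight-element frame of the transformed input coincides with $g\gF(\mV)$. Your explicit treatment of the translation component, the semidirect-product composition, and the sign-flip closure merely spells out steps the paper dismisses as trivial.
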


    \subsection{Shape space instances}
    \paragraph{Global Euclidean: Mesh $\too$ mesh. }
    In this case, we would like to learn mesh encoder and decoder that are equivariant to global Euclidean motion. We consider the shape spaces $X=Y=\Real^{n\times 3}$ that represent all possible coordinate assignments to vertices of some fixed $n$-vertex template mesh. 
    The latent space is defined as  $Z=\Real^{m+d\times 3}$ consisting of vectors of the form $\mZ=(\vu,\mU)\in Z$, where the $\vu\in\Real^m$ part contains invariant features and the $\mU\in\Real^{d\times 3}$ part contains equivariant features. %
    The group actions $\rho_X,\rho_Z,\rho_Y$ are as defined in \eqref{e:rho_1}. 
    We define our encoder $\Phi$ and decoder $\Psi$ by FA (\eqref{e:fa}), \ie, $    \Phi=\ip{\phi}_{\gF}$, and $\Psi=\ip{\psi}_{\gF}$ where the frames are defined as in Section \ref{ss:frame_averaging} with constant weights $\vw=\one$, $\phi:X\too Z$ and $\psi:Z\too Y$ are standard GNNs adapted to meshes (implementation details are provided in Section \ref{s:implementation}).
    
    \paragraph{Global Euclidean: Point-cloud $\too$ implicit.}
    Here we adopt the setting of \cite{deng2021vector} where $X=\Real^{n\times 3}$ represents all possible $n$-point clouds in $\Real^3$, and $Y=C^1(\Real^3)$ contains implicit representations of a shapes in $\Real^3$. That is, for $f\in Y$ we consider its zero preimage,  \begin{equation}\label{e:level_set}
        f^{-1}(0) = \set{\vx\in\Real^3 \ \vert \ f(\vx)=0}
    \end{equation}
    as our shape rerpesenation in $\Real^3$. If $0$ is a regular value of $f$ then the Implicit Function Theorem implies that $f^{-1}(0)$ is a surface in $\Real^3$. A regular value $r\in \Real$ of $f$ means that at every preimage $\vx\in f^{-1}(r)$, the gradient does not vanish, $\nabla f(\vx)\ne 0$.
    The latent space is again $Z=\Real^{m+d\times 3}$, consisting of vectors of the form $\mZ=(\vu,\mU)\in Z$. The actions $\rho_X,\rho_Z$ are defined as in \eqref{e:rho_1}, while the action $\rho_Y$ is defined as in \eqref{e:rho_2}. 
    The motivation behind the definition of $\rho_Y$ is that $\rho_Y(g) f$ would transform the shape represented by $f$, that is $f^{-1}(0)$, by $g$: 
    \begin{align*}
        (\rho_Y(g)f)^{-1}(0) &= \set{\vx \ \vert \ f(\mR^T(\vx-\vt))=0} \\ 
        &= \set{\mR\vx+\vt \ \vert \ f(\vx)=0} \\
        &= \mR f^{-1}(0) + 
        \vt
    \end{align*}
    
    The encoder is defined as $\Phi=\ip{\phi}_\gF$, where the frames are computed as described in Section \ref{ss:frame_averaging} with constant weights $\vw=\one$, and $\phi:X\too Z$ is a point cloud network (implementation details are provided in Section \ref{s:implementation}). Since the decoder needs to output an element in $Y$, which is a space of functions, we define the decoder by
    \begin{equation}
        \Psi(\mZ) = \hat{\Psi}(\mZ,\cdot),
    \end{equation}  
    where $\hat{\Psi}: Z \times \Real^3 =\Real^{m+3\times(d+1)}\too \Real$.  
    Following \cite{deng2021vector}, to make the decoder $\Psi$ equivariant as a map $Z\too Y$ it is enough to ask that $\hat{\Psi}$ is equivariant under appropriate actions. Namely, the action in \eqref{e:rho_1} applied to $V=\Real^{m+3\times (d+1)}$, and $W=\Real$, where the latter is just the trivial action providing invariance, \ie, $\rho_\Real(g)\equiv 1$.
    \begin{proposition}
    \label{lem:Psi_hat_Psi}
    $\Psi$ is equivariant iff \ $\hat{\Psi}$ is equivariant.
    \end{proposition}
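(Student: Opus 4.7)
The plan is to unwind the definitions on both sides and show that the two equivariance conditions coincide pointwise in $\vx \in \Real^3$ after a single change of variables. The only subtle point is to correctly identify how $g=(\mR,\vt)\in E(3)$ acts on the extra factor $\Real^3$ that appears in the domain of $\hat\Psi$.

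First I would spell out the two actions. On $Y=C^1(\Real^3)$, \eqref{e:rho_2} gives $(\rho_Y(g)f)(\vy) = f(\mR^T(\vy-\vt))$. On the product space $Z\times\Real^3 = \Real^{m+3\times(d+1)}$, viewed as $(\vu,\mU,\vx)$ with $\vx$ contributing one extra equivariant row, \eqref{e:rho_1} specializes to $\rho_{Z\times\Real^3}(g)(\mZ,\vx) = (\rho_Z(g)\mZ,\,\mR\vx+\vt)$, and $\rho_{\Real}(g)\equiv 1$. So equivariance of $\hat\Psi$ reads
\begin{equation}\label{e:hatPsi_equi}
    \hat\Psi(\rho_Z(g)\mZ,\,\mR\vx+\vt) = \hat\Psi(\mZ,\vx) \quad \forall\, \mZ,\vx,g,
\end{equation}
while equivariance of $\Psi:Z\to Y$ means $\Psi(\rho_Z(g)\mZ)=\rho_Y(g)\Psi(\mZ)$, which, after evaluating at an arbitrary $\vy\in\Real^3$ and using $\Psi(\mZ)=\hat\Psi(\mZ,\cdot)$, becomes
\begin{equation}\label{e:Psi_equi}
    \hat\Psi(\rho_Z(g)\mZ,\vy) = \hat\Psi(\mZ,\,\mR^T(\vy-\vt)) \quad \forall\, \mZ,\vy,g.
\end{equation}

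For the ``if'' direction, assume \eqref{e:hatPsi_equi} and set $\vy = \mR\vx+\vt$, equivalently $\vx=\mR^T(\vy-\vt)$; substituting into \eqref{e:hatPsi_equi} yields \eqref{e:Psi_equi}, so $\Psi$ is equivariant. For the ``only if'' direction, assume \eqref{e:Psi_equi} and, given any $\mZ,\vx,g$, substitute $\vy=\mR\vx+\vt$ into \eqref{e:Psi_equi}; since $\mR^T(\vy-\vt)=\vx$, we obtain $\hat\Psi(\rho_Z(g)\mZ,\mR\vx+\vt)=\hat\Psi(\mZ,\vx)$, which is \eqref{e:hatPsi_equi}.

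The whole argument is essentially a change of variables, so there is no real obstacle; the only care needed is in identifying the action on $Z\times\Real^3$ and checking that the trivial action on $\Real$ is the correct codomain action to make the statement come out symmetric. Because the substitution $\vy \leftrightarrow \vx$ is a bijection of $\Real^3$ for every fixed $g$, both implications hold with universally quantified variables, completing the equivalence.
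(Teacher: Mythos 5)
Your proof is correct and takes essentially the same route as the paper's: both unwind $\Psi(\rho_Z(g)\mZ)=\rho_Y(g)\Psi(\mZ)$ pointwise using the definition of $\rho_Y$, then apply the bijective change of variables $\vx\mapsto\mR\vx+\vt$ to identify the resulting identity with equivariance of $\hat{\Psi}$ under the product action on $Z\times\Real^3$ with the trivial action on $\Real$. The only difference is cosmetic -- you spell out the two implications separately where the paper writes a chain of equivalences -- and your identification of the action on the extra $\Real^3$ factor as $\vx\mapsto\mR\vx+\vt$ matches the row-wise action in the paper's \eqref{e:rho_1}.
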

    The decoder is therefore defined as $\hat{\Psi}=\ip{\psi}_\gF$, where $\psi:Z\times \Real^3 \too \Real$ is an MLP (implementation details are provided in Section \ref{s:implementation}), and the frame is defined as in Section \ref{ss:frame_averaging} with constant weights $\vw=\one$.
    
    \begin{figure}[t]
    \centering
         \includegraphics[width=\columnwidth]{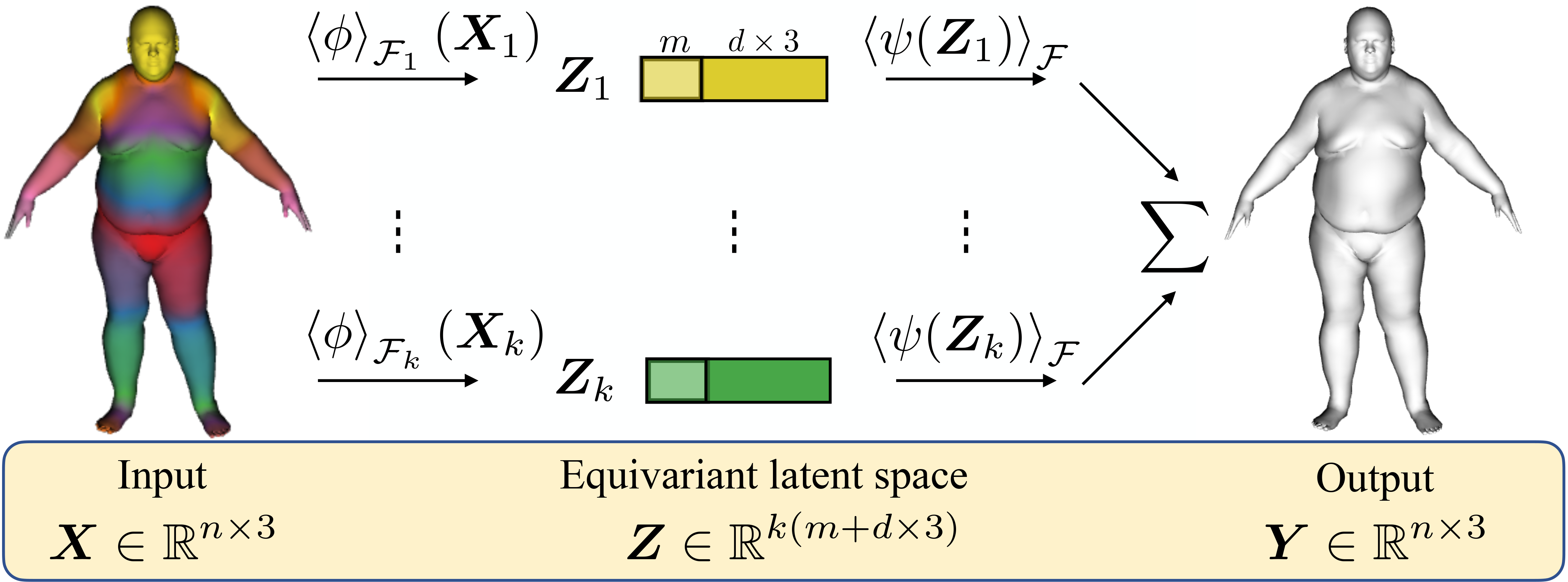}
        \caption{Piecewise Euclidean: Mesh $\too$ mesh. The same $\phi$ backbone is used for the equivariant encoding of each part. Similarly, the same $\psi$ backbone is used for the equivariant decoding of each part's latent code. Lastly, the final prediction is a weighted sum of each part's equivariant output mesh.}
        \label{fig:parts}
        \vspace{-7.5pt}
    \end{figure}
    
    \paragraph{Piecewise Euclidean: Mesh $\too$ mesh.} In this scenario we generalize our framework to be  equivariant to \emph{different} Euclidean motions applied to different object parts (see Figure \ref{fig:parts}). We consider (as before) the shape spaces $X=Y=\Real^{n\times 3}$ that represent all possible coordinate assignments to vertices of some fixed $n$-vertex template mesh.
    The $k$ parts are defined using a partitioning weight matrix (\eg, as used in linear blend skinning) $\mW\in \Real^{n\times k}$, where $\mW_{i,j}\in [0,1]$ indicates the probability that the $i$-th vertex belongs to the $j$-th part, and $\sum_{j=1}^k \mW_{i,j}=1$.
    The latent space has the form $Z=Z_1\times \cdots \times Z_k$, where $Z_j \in \Real^{m+d\times 3}$. Note that $k=1$ represents the case of a global Euclidean motion, as discussed above.
    
    The actions $\rho_X,\rho_Y, \rho_{Z_j}$, $j\in [k]=\set{1,\ldots,k}$, are defined as in \eqref{e:rho_1}. Lastly, we define the encoder and decoder by
    \begin{align}\label{e:local_enc_and_dec}
    \Phi(\mX) &= \parr{ \ip{\phi}_{\gF_j}(\mX_j) \ \vert \ j\in[k]}  \\
    \label{e:piecewise_Psi}
        \Psi(\mZ) &= \sum_{j=1}^k \vw_j  \odot \ip{\psi(\mZ_j)}_{\gF}     
    \end{align}
    where $\phi:X\too Z$, $\psi:Z\too Y$ are Graph Neural Networks (GNNs) as above; $\mX_j\in X$ is the geometry of each part, where all other vertices are mapped to the part's centroid, \ie, $$\mX_j=(\one-\vw_j)\frac{\vw^T\mX}{\vw^T\one}+\vw_j\odot\mX,$$ $\vw_j = \mW_{:,j}$ the $j$-th column of the matrix $\mW$, and each part's frame $\gF_j$ is defined as in Section \ref{ss:frame_averaging} with weights $\vw_j$. The part's latent code is $\mZ_j = \ip{\phi}_{\gF_j}(\mX_j) \in Z_j$. For vector $\va\in\Real^n$ and matrix $\mB\in\Real^{n\times 3}$ we define the multiplication $\va\odot \mB$ by $(\va\odot \mB)_{i,j}=\va_i \mB_{i,j}$. 
    
    If using hard weights, \ie, $\mW\in\set{0,1}^{n\times k}$, this construction guarantees \emph{part-equivariance}. That is, if the $j$-th part of an input shape $\mX\in X$ is transformed by $g_j\in G$, $j\in [k]$, that is,
    \begin{equation*}
        \mX' = \sum_{j=1}^k \vw_j\odot (\rho_X(g_j)\mX)
    \end{equation*}
    then the corresponding latent codes, $\mZ_j$, will be transformed by $\rho_{Z_j}(g_j)$, namely
    \begin{equation*}
        \mZ'_j = \rho_{Z_j}(g_j)\mZ_j
    \end{equation*}
    and the decoded mesh will also transform accordingly,
    \begin{equation*}
        \mY' = \sum_{j=1}^k\vw_j\odot (\rho_Y(g_j)\mY).
    \end{equation*}
    \begin{theorem}\label{thm:part_equi}
    The encoder and decoder in equations \ref{e:local_enc_and_dec} and \ref{e:piecewise_Psi} are part-equivariant.
    \end{theorem}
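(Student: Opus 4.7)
The plan is to reduce the theorem to three successive equivariances of the building blocks of equations \ref{e:local_enc_and_dec}--\ref{e:piecewise_Psi}: (i) the per-part geometry extractor $\mX\mapsto \mX_j$, (ii) the frame-averaged encoder $\ip{\phi}_{\gF_j}$, and (iii) the frame-averaged decoder $\ip{\psi}_\gF$. The algebraic facts powering the argument are the hard-weight identities $\vw_j\odot \vw_{j'}=\delta_{jj'}\vw_j$ and $\sum_{j'=1}^k \vw_{j'}=\one$, together with the explicit form of the $E(3)$ action $\rho_X(g)\mX=\mX\mR^T+\one\vt^T$ from \ref{e:rho_1}.

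I will first handle (i), showing that $\mX'_j=\rho_X(g_j)\mX_j$. By the hard-weight identity, $\vw_j\odot \mX'=\vw_j\odot \rho_X(g_j)\mX$, and the $\vw_j$-weighted centroid of $\mX'$ equals $\mR_j\vc_j+\vt_j$, where $\vc_j$ is the $\vw_j$-weighted centroid of $\mX$. Substituting these into the definition of $\mX'_j$ and collecting the $\mR_j^T$ and $\vt_j^T$ pieces, the two translation contributions $(\one-\vw_j)\vt_j^T$ and $\vw_j\odot\one\vt_j^T$ collapse rowwise to $\one\vt_j^T$, which gives $\mX'_j=\mX_j\mR_j^T+\one\vt_j^T=\rho_X(g_j)\mX_j$.

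For (ii), the Frame Averaging theorem of \cite{puny2021frame} combined with Proposition \ref{prop:frame_equi} implies that $\ip{\phi}_{\gF_j}$ is equivariant, so $\mZ'_j=\ip{\phi}_{\gF_j}(\mX'_j)=\rho_{Z_j}(g_j)\mZ_j$. For (iii), the same theorem gives equivariance of $\ip{\psi}_\gF$, hence, writing $\mA_j:=\ip{\psi}_\gF(\mZ_j)$, we have $\ip{\psi}_\gF(\mZ'_j)=\rho_Y(g_j)\mA_j$, and so $\mY'=\sum_{j=1}^k \vw_j\odot \rho_Y(g_j)\mA_j$. Under hard weights, $\vw_j\odot \mA_j=\vw_j\odot \mY$ (only part $j$ of $\mY$ is contributed by $\mA_j$), and since $\rho_Y(g_j)$ acts rowwise as an affine map, we may replace $\mA_j$ by $\mY$ inside each summand, yielding $\mY'=\sum_{j=1}^k \vw_j\odot \rho_Y(g_j)\mY$.

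The main obstacle is the bookkeeping in (i): carefully separating the rotation and translation parts of the Euclidean action and verifying that the ``pad-to-centroid'' construction $\mX\mapsto\mX_j$ is itself $g_j$-equivariant even though the rest of $\mX$ is being transformed by the other $g_{j'}$'s. Once that identity is in hand, the remainder is frame-averaging equivariance applied per part plus a short hard-weight calculation.
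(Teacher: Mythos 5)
Your proposal is correct, and it uses the same toolkit as the paper (hard-weight algebra for the pad-to-centroid construction, the equivariant weighted frame of Proposition \ref{prop:frame_equi}, the FA equivariance theorem, and a row-wise argument for the decoder), but the reduction is organized along a genuinely different route. The paper works inside the frame-averaging sum: for a frame element $g=(\mR,\vt)\in\gF_j(\mX)$, whose translation is the $\vw_j$-weighted centroid, it proves $\rho_X(g)^{-1}\mX_j=\vw_j\odot\rho_X(g)^{-1}\mX$, observes that for hard weights $\gF_j$ is determined only by the part-$j$ rows of $\mX$, and concludes that $\mZ_j$ is an equivariant function of the $j$-th part alone; the decoder step is then a one-line remark. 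You instead prove the intertwining identity $\mX'_j=\rho_X(g_j)\mX_j$ for the padding map and then invoke global FA equivariance of $\ip{\phi}_{\gF_j}$ as a black box, which makes the argument more modular (part-equivariance reduces to ordinary equivariance plus one bookkeeping lemma), while the paper's unrolling makes explicit the stronger fact that the $j$-th latent code depends only on the $j$-th part's vertices. Your decoder step, replacing $\mA_j$ by $\mY$ inside each summand using $\vw_j\odot\mA_j=\vw_j\odot\mY$ and the row-wise affine form of $\rho_Y(g_j)$, is correct and in fact more explicit than the paper's. One point worth spelling out when you write this up: the paper's proof evaluates the frame at the full shape, $\gF_j(\mX)$, whereas your appeal to Proposition \ref{prop:frame_equi} treats the frame as evaluated at the argument $\mX_j$; under hard weights these coincide, since the $\vw_j$-weighted centroid and covariance see only the part-$j$ rows, which are identical in $\mX$ and $\mX_j$, so your argument goes through, but that observation should be stated.
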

    In practice we work with a smoothed weighting matrix allowing values in $[0,1]$, \ie, $\mW\in [0,1]^{n\times k}$, losing some of this exact part equivariance for better treatment of transition areas between parts.

    \section{Implementation details}
    \label{s:implementation}
    In this section we provide the main implementation details, further details can be found in the supplementary. 
    \paragraph{Mesh $\too$ mesh.}
    The backbone architectures for $\phi,\psi$ is a $6$ layer GNN, exactly as used in \cite{huang2021arapreg}; The specific dimensions of layers and hidden features for each experiment is detailed in the supplementary appendix. We denote the learnable parameters of both networks by $\theta$. The training loss is the standard autoencoder reconstruction loss of the form %
    \begin{equation}\label{e:loss_rec}
        \gL_{\text{rec}}(\theta) = \frac{1}{N}\sum_{i=1}^N \norm{ \Psi(\Phi(\mX^{(i)})) - \mX^{(i)} }_F
    \end{equation}
    where $\norm{\cdot}_F$ is the Frobenious norm and $\set{\mX^{(i)}}_{i=1}^N\subset\Real^{n\times 3}$ is a batch drawn from the shape space's training set. 
    
    \paragraph{Point cloud $\too$ implicit.} 
    The backbone encoder architecture $\phi$ is exactly as in \cite{mescheder2019occupancy} constructed of PointNet \cite{qi2017pointnet} with $4$ layers. The decoder is an MLP as in \cite{atzmon2020sald} with $8$ layers with $512$ features each. We trained a VAE where the latent space is $Z=\Real^{d + m +d\times 3}$ containing codes of the form $(\vmu,\veta)$, where $\vmu\in\Real^{m+d\times 3}$ is the latent mean, and $\veta\in\Real^{d}$ is the invariant latent log-standard-deviation. For training the VAE we use a combination of two losses
    \begin{equation}
        \gL(\theta) = \gL_{\text{sald}}(\theta) + 0.001 \gL_{\text{vae}}(\theta),
    \end{equation}
    where $\gL_{\text{sald}}$ is the SALD loss \cite{atzmon2020sald},
    \begin{equation}
        \gL_{\text{sald}}(\theta) = \frac{1}{N}\sum_{i=1}^N \int_\Omega \tau(\Psi(\gN(\vmu^{(i)},\veta^{(i)})), h)(\vx)d\vx
    \end{equation}
    where $(\vmu^{(i)},\veta^{(i)})=\Phi(\mX^{(i)})$, $\gN(\va,\vb)$ is an axis aligned Gaussian i.i.d.~sample with mean $\va$ and standard deviation $\exp(\diag(\vb))$. $h(\cdot)$ is the unsigned distance function to $\mX^{(i)}$, and $\tau(f,g)(\vx) = \abs{\abs{f(\vx)} - g(\vx)} + \min\set{ \norm{\nabla f(\vx)-\nabla g(\vx)}_2 , \norm{\nabla f(\vx)+\nabla g(\vx)}_2}$. The domain of the above integral, $\Omega\subset \Real^3$, is set according to the scene's bounding box. In practice, the integral is approximated using Monte-Carlo sampling. Note that this reconstruction loss is unsupervised (namely, using only the input raw point cloud). The VAE loss is defined also as in \cite{atzmon2020sald} by  
    \begin{equation}
        \gL_{\text{vae}}(\theta) = \sum_{i=1}^N  \norm{\vmu^{(i)}}_1 + \norm{\veta^{(i)} + \one}_1,
    \end{equation} 
    where $\norm{\cdot}_1$ denotes the $1$-norm.

    \section {Experiments}\label{sec:exp}
    
    
    We have tested our FA shape space learning framework under two kinds of symmetries $G$: global Euclidean transformations and piecewise Euclidean transformations. 
    
    
    \subsection{Global Euclidean}
    In this case, we tested our method both in the mesh $\too$ mesh and the point-cloud $\too$ implicit settings. 
    
    \begin{table}[t]
        \centering
        \scriptsize
        \setlength\tabcolsep{8pt} 
        \begin{tabular}{c}
            \begin{adjustbox}{max width=\textwidth}
                \aboverulesep=0ex
                \belowrulesep=0ex
                \renewcommand{\arraystretch}{1.1}
                \begin{tabular}{l|c|c|c}
                Method &  $I$ & $z$ & $SO(3)$  \\ 
                        \hline
                AE  & 5.16 & 9.96 & 15.41   \\
                AE-Aug & 5.22 & 5.86 &  5.12 \\
                Ours  & \textbf{4.39} & \textbf{4.35} & \textbf{4.66}   
                \end{tabular}
                \end{adjustbox}
    \end{tabular}
    \caption{ Global Euclidean mesh$\too$mesh shape space experiment; MSE error (lower is better) in three test versions of the DFAUST \cite{dfaust:CVPR:2017} dataset, see text for details.  }
    \label{tab:mesh_global_rigid}
    \end{table}
    
    \vspace{-10pt}
    \paragraph{Mesh $\rightarrow$ mesh.}
    In this experiment we consider the DFaust dataset \cite{dfaust:CVPR:2017} of human meshes parameterized with SMPL \cite{SMPL:2015}. The dataset consists of 41,461 human shapes where a random split is used to generate a training set of 37,197 models, and a test set of 4,264 models. We used the same generated data and split as in \cite{huang2021arapreg}. We generated two additional test sets of randomly oriented models: randomly rotated models about the up axis (uniformly), denoted by $z$, and randomly rotated models (uniformly), denoted by $SO3$. We denote the original, aligned test set by $I$. We compare our global Euclidean mesh$\too$mesh autoencoder versus the following baselines: Vanilla Graph autoencoder, denoted by AE; and the same AE trained with random rotations augmentations, denoted by AE-Aug. Note that the architecture used for AE and AE-Aug is the same backbone architecture used for our FA architecture. Table \ref{tab:mesh_global_rigid} reports the average per-vertex euclidean distance (MSE) on the various test sets: $I$, $z$ and $SO3$. Note that FA compares favorably to the baselines in all tests.

    \begin{table}[h]
        \centering
        \scriptsize
        \setlength\tabcolsep{4.0pt} 
        \begin{tabular}{c}
            \begin{adjustbox}{max width=\textwidth}
                \aboverulesep=0ex
                \belowrulesep=0ex
                \renewcommand{\arraystretch}{1.0}
                \begin{tabular}[t]{l|cc|cc|cc|cc}
                \multicolumn{1}{l}{} &  \multicolumn{2}{c}{teddy bear} & \multicolumn{2}{c}{bottle} & \multicolumn{2}{c}{suitcase} & \multicolumn{2}{c}{banana} \\ 
                Method
                & $\dist_{\text{C}}^{\rightarrow}$ & $\dist_{\text{C}}$  & $\dist_{\text{C}}^{\rightarrow}$ & $\dist_{\text{C}}$ & 
                $\dist_{\text{C}}^{\rightarrow}$ & $\dist_{\text{C}}$ & 
                $\dist_{\text{C}}^{\rightarrow}$ & $\dist_{\text{C}}$ \\
                
                \hline
                        
                VAE  & 5.11 & 2.611 & 0.419 & 0.225 & 0.619 & 0.341 & 0.309 & 0.177    \\
                VN \cite{deng2021vector}  & 0.047 & \textbf{0.421} & 0.638 & 0.334 & 0.348 & 0.218 & 0.157 & 0.087   \\
                Ours & \textbf{0.046} & 0.451 &  \textbf{0.226} & \textbf{0.129} & \textbf{0.079} & \textbf{0.086} & \textbf{0.118} & \textbf{0.074}
                
                \end{tabular}
                \end{adjustbox}
    \end{tabular}
    \caption{Global Euclidean point cloud $\too$ implicit shape space experiment; CommonObject3D \cite{reizenstein2021common} dataset.}
    \label{tab:common3D}
    \end{table}
    \begin{figure}[t]
    \centering
         \includegraphics[width=\columnwidth]{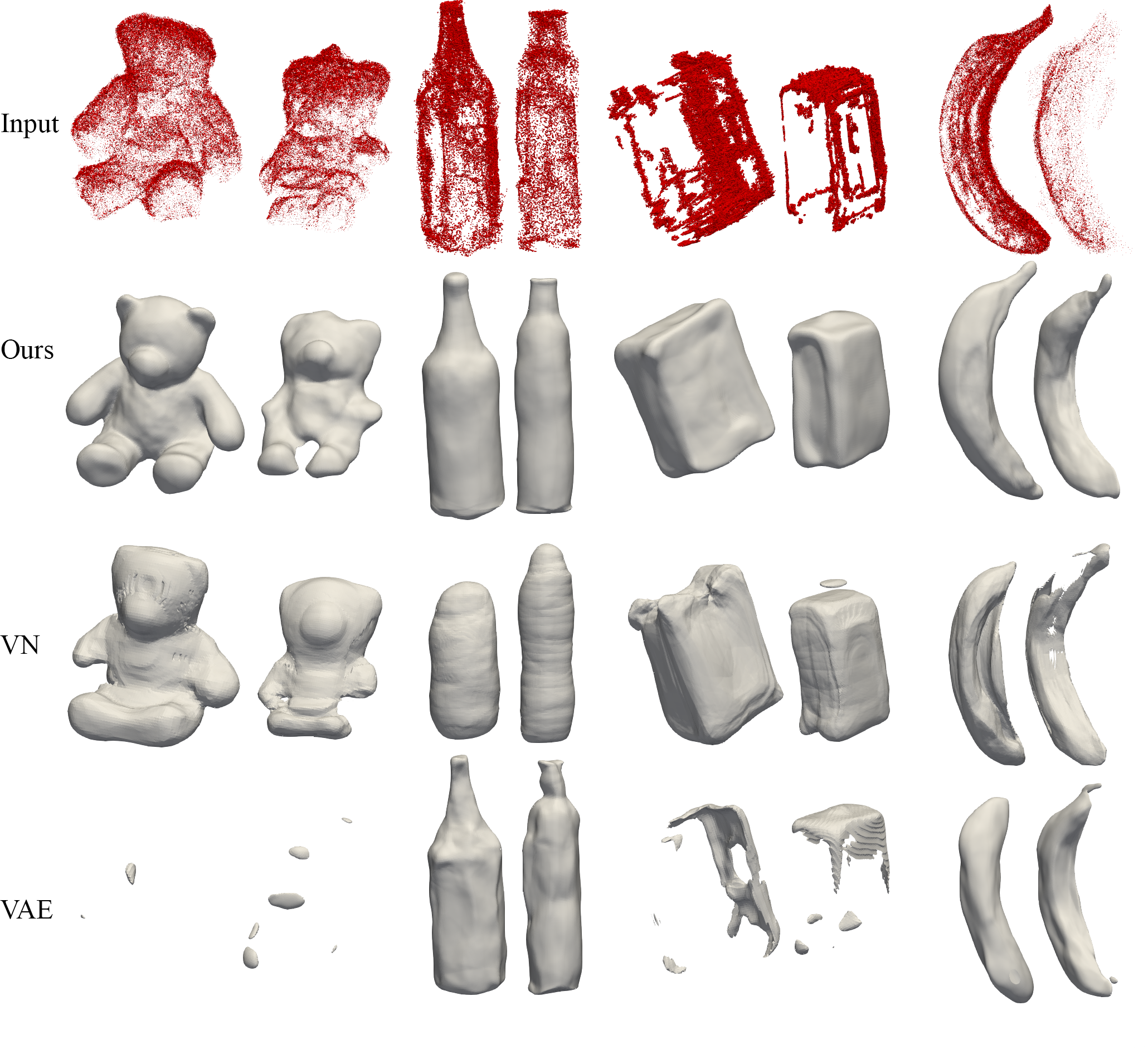}
        \caption{Global Euclidean point cloud $\too$ implicit, qualitative test results; CommonObject3D \cite{reizenstein2021common} dataset.}
        \label{fig:common3D}
        \vspace{-15pt}
    \end{figure}
    \vspace{-10pt}
    \paragraph{Point cloud $\too$ implicit.}
    In this experiment we consider the CommonObject3D dataset \cite{reizenstein2021common} that contains 19k objects from 50 different classes. We have used only the objects' point clouds extracted from videos using COLMAP \cite{schoenberger2016sfm}. The point clouds are very noisy and partial (see \eg, Figure \ref{fig:common3D}, where input point clouds are shown in red), providing a "real-life" challenging dataset. Note that we have not used any other supervision for the implicit learning. We have used 4 object catagories: teddy bear ($747$ point clouds), bottle (296 point clouds), suitcase (480 point clouds), and banana (197 point clouds). We have divided each category to train and test sets randomly based on a 70\%-30\% split. We compare to the following baselines: Variational Autoencoder, denoted by VAE; Vector Neurons \cite{deng2021vector} version of this VAE, denoted VN. We used the official VN implementation. For our method we used an FA version of the same baseline VAE architecture. Table \ref{tab:common3D} reports two error metrics on the test set: $\dist_{\text{C}}^{\rightarrow}$ that denotes the one sided Chamfer distance from the input point cloud to the generated shape  and $\dist_{\text{C}}$ that denotes the symmetric Chamfer distance (see supplementary for exact definitions). Note that our method improves the symmetric Chamfer metric in almost all cases. Figure \ref{fig:common3D} shows some typical reconstructed implicits (after contouring) in each category, along with the input test point cloud (in red). Qualitatively our framework provides a more faithful reconstruction, even in this challenging noisy scenario without supervision. Note that for the "teddy bear" class we provide a visually improved reconstruction that is not apparent in the qualitative score due to high noise and outliers in the input point clouds.



    \begin{figure*}
        \centering
        \includegraphics[width=1.0\textwidth]{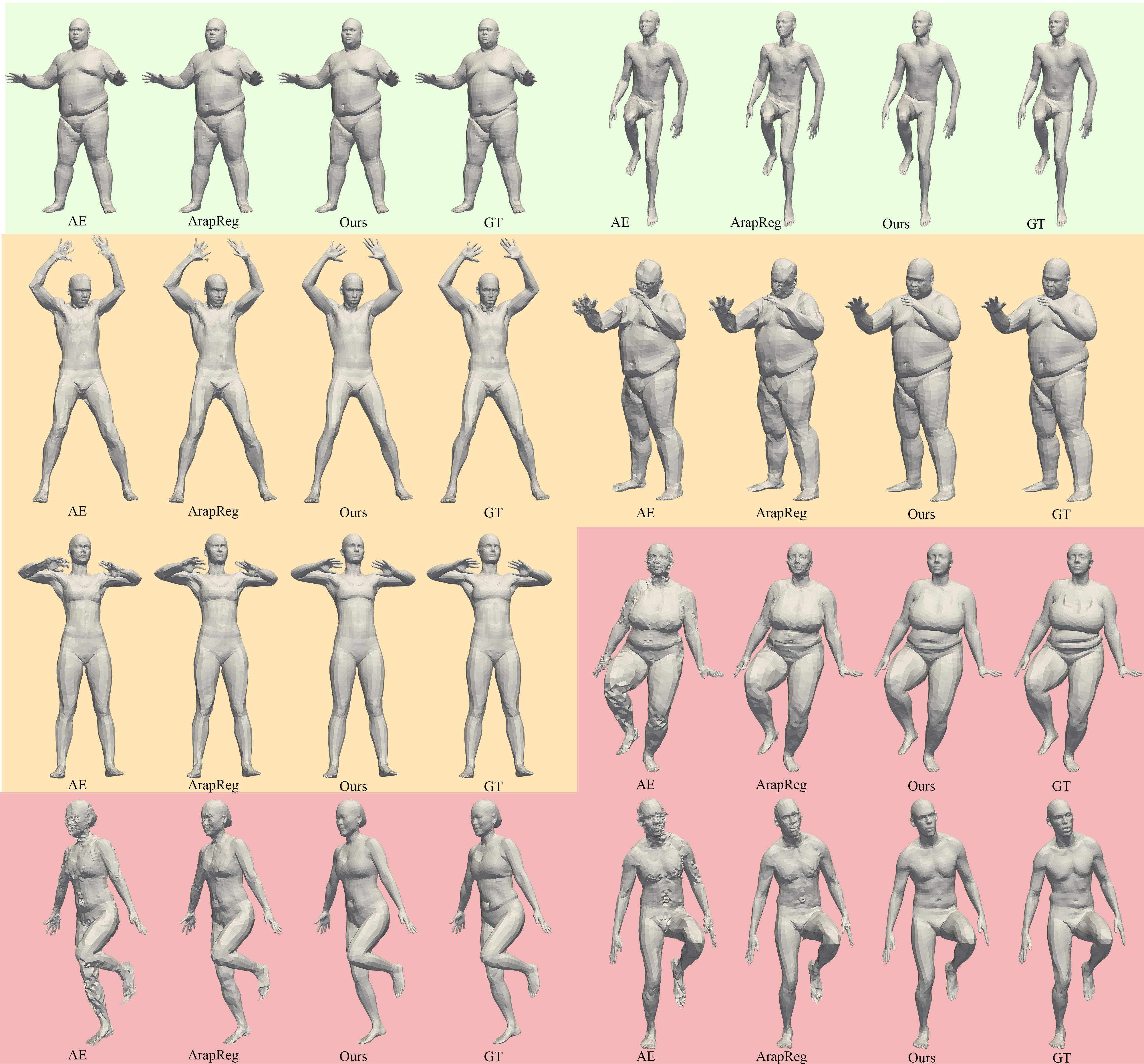}
        \caption{Piecewise Euclidean mesh $\too$ mesh, qualitative results; DFaust \cite{dfaust:CVPR:2017} dataset. Colors mark different splits: green is the random (easy) split; orange is the unseen random pose split; and red is the unseen pose split, see text for details. Our method demonstrates consistently high-quality results across splits of different difficulty levels.\vspace{10pt} }
        \label{fig:piecewise}
        \vspace{-12pt}
    \end{figure*}

    \subsection{Piecewise Euclidean}
    \vspace{-2pt}
    \paragraph{Mesh $\too$ mesh.}
    In this experiment we consider three different datasets: DFaust \cite{dfaust:CVPR:2017}, SMAL \cite{zuffi20173d} and MANO \cite{romero2020group}. For the DFaust dataset we used train-test splits in an increasing level of difficulty: random split of train-test taken from \cite{huang2021arapreg}, as described above; unseen random pose - removing a random (different) pose sequence from each human and using it as test; and unseen pose - removing the \emph{same} pose sequence from all humans and using it as test. The SMAL dataset contains a four-legged animal in different poses. We use the data generated in \cite{huang2021arapreg} of 400 shapes, randomly split into a $300$ train, $100$ test sets. The MANO dataset contains 3D models of realistic human hands in different poses. Using the MANO SMPL model we generated 150 shapes, randomly split into a $100$ train and $50$ test sets. We compared our method to the following baselines: Vanilla autoencoder, denoted by AE and ARAPReg \cite{huang2021arapreg} that reported state of the art results on this dataset. Note that both the AE and our method use the same backbone architecture. ARAPReg, report autodecoder to be superior in their experiments and therefore we compared to that version. Note that all compared methods have the same (backbone) decoder architecture.  Figure \ref{fig:piecewise} shows typical reconstruction results on the tests sets: green marks the random (easy) split; orange marks the random unseen pose split; and red marks the global unseen pose split. Note our method is able to produce very high-fidelity approximations of the ground truth, visually improving artifacts, noise and inaccuracies in the baselines (zoom in to see details). Lastly, we note that we use also partition skinning weight matrix (defined on the single rest-pose model) as an extra supervision not used by ARAPReg. 
    
    
    
    \begin{table}[h]
        \centering
        \scriptsize
        \setlength\tabcolsep{3.8pt} 
        \begin{tabular}{c}
            \begin{adjustbox}{max width=\textwidth}
                \aboverulesep=0ex
                \belowrulesep=0ex
                \renewcommand{\arraystretch}{1.1}
                \begin{tabular}{l|c|c|c|c|c}
                Method &  random & unseen random pose & unseen pose & SMAL & MANO
                     \\ 
                        \hline
                AE  & 5.45 & 7.99 & 6.27 & 9.11 & 1.34 \\
                ARAPReg & 4.52 & 7.77 & 3.38 & 6.68 & 1.15 \\
                Ours  & \textbf{1.68} & \textbf{1.89} & \textbf{1.90} & \textbf{2.44}  & \textbf{0.86} 
                \end{tabular}
                \end{adjustbox}
    \end{tabular}
    \caption{ Piecewise Euclidean mesh $\too$ mesh experiment; MSE error (lower is better); DFaust \cite{dfaust:CVPR:2017}, SMAL \cite{zuffi20173d} and MANO \cite{romero2020group} datasets. \vspace{-10pt}}
    \label{tab:normal}
    \end{table}

    \begin{table*}[t]
        \centering
        \scriptsize
        \setlength\tabcolsep{6pt} 
        \begin{tabular}{c}
            \begin{adjustbox}{max width=\textwidth}
                \aboverulesep=0ex
                \belowrulesep=0ex
                \renewcommand{\arraystretch}{1.0}
                \begin{tabular}[t]{c|ccc|ccc|ccc|ccc}
                \multicolumn{1}{c}{} & 
                \multicolumn{6}{c}{Within Distribution} & 
                \multicolumn{6}{|c}{Out of Distribution} \\
                \cmidrule{2-13}
                \multicolumn{1}{c}{} &
                \multicolumn{3}{c}{IoU bbox} &
                \multicolumn{3}{c}{IoU surface} &
                \multicolumn{3}{c}{IoU bbox} &
                \multicolumn{3}{c}{IoU surface}  \\
                & NASA & SNARF & Ours  & NASA & SNARF & Ours & NASA & SNARF & Ours & NASA & SNARF & Ours \\
                    \midrule
                    50002 &
                     96.56\% & 97.50\% & \textbf{98.67\%} & 
                     84.02\% & 89.57\% & \textbf{93.28\%} &
                     87.71\% & 94.51\% &  \textbf{96.76\%}&
                     60.25\% & 79.75\% & \textbf{85.06\%}  \\
                     
                     50004 &
                     96.31\% & 97.84\% & \textbf{98.64\%} & 
                     85.45\% & 91.16\% & \textbf{94.57\%} &  
                     86.01\% & 95.61\% & \textbf{96.19\%} &
                     62.53\% & 83.34\% & \textbf{85.84\%}  \\
                    
                    50007 &
                     96.72\% & 97.96\% & \textbf{98.62\%} &  
                     86.28\% & 91.02\% & \textbf{94.11\%}  &
                     80.22\% & 93.99\% & \textbf{95.31\%} &
                     51.82\% & 77.08\% & \textbf{81.91\%} \\
                     50009 &
                     94.96\% & 96.68\% & \textbf{97.75\%} & 
                     84.52\% & 88.19\% & \textbf{92.84\%}  &
                     78.15\% & 91.22\% & \textbf{94.75\%} &
                     55.86\% & 75.84\% & \textbf{84.60\%} \\
                     50020 &
                     95.75\% & 96.27\% & \textbf{97.61\%} &
                     87.57\% & 88.81\% &  \textbf{92.60\%} &  
                     83.06\% & 93.57\% & \textbf{95.17\%} &
                     62.01\% & 81.37\% & \textbf{85.66\%} \\
                     50021 &
                     95.92\% & 96.86\% & \textbf{98.55\%} &
                     87.01\% & 90.16\% & \textbf{95.38\%}  &
                     81.80\% & 93.76\% & \textbf{96.35\%} &
                     65.49\% & 81.49\% & \textbf{88.86\%} \\
                     50022 &
                     97.94\% & 97.96\% & \textbf{98.39\%} &
                     91.91\% & 92.06\% &  \textbf{93.68\%}  &
                     87.54\% & 94.67\% & \textbf{96.12\%} &
                     70.23\% & 83.37\% & \textbf{85.80\%} \\
                     50025 &
                     95.50\% & 97.54\% & \textbf{98.48\%} & 
                     86.19\% & 91.25\% & \textbf{94.74\%}  &
                     83.14\% & 94.48\% & \textbf{95.99\%} &
                     60.88\% & 82.48\% & \textbf{86.58\%} \\
                     50026 &
                     96.65\% & 97.64\% & \textbf{98.61\%} &
                     87.72\% & 91.09\% & \textbf{94.64\%} &
                     84.58\% & 94.13\% & \textbf{96.45\%} &
                     59.78\% & 80.01\% & \textbf{87.10\%} \\
                     50027 &
                     95.53\% & 96.80\% & \textbf{97.95\%}  &
                     86.13\% & 89.47\% & \textbf{93.46\%} & 
                     83.97\% & 93.76\% & \textbf{95.61\%} &
                     61.82\% & 81.81\% & \textbf{86.60\%} 
            \end{tabular} 
            \end{adjustbox}
          
        \end{tabular}
        \vspace{-3pt}
        \caption{Piecewise Euclidean mesh $\too$ mesh, comparison to implicit articulation methods. DFaust \cite{dfaust:CVPR:2017} and PosePrior \cite{akhter2015pose} datasets. \vspace{10pt}}
        \label{tab:all_humans_splits}
        \vspace{-18pt}
    \end{table*}
    
    \vspace{-8pt}
    \paragraph{Interpolation in shape space.}
    In this experiment we show qualitative results for interpolating two latent codes $\mZ^{(j)}=(\vq^{(j)},\mQ^{(j)})\in Z$, $j=0,1$, computed with our encoder for two input shapes $\mX^{(j)}$, $j=0,1$. We use the encoder and decoder learned in the "unseen pose" split described above. Since $Z$ is an equivariant feature space, if $\mX^{(1)}$ is an Euclidean transformed version of $\mX^{(0)}$, \ie, $\mX^{(1)} = \rho_X(g)\mX^{(0)}$, then equivariance would mean that $\mZ^{(1)}=\rho_Z(g)\mZ^{(0)}$. Therefore interpolation in this case should be done by finding the optimal rotation and translation between the equivariant parts of $\mZ^{(0)}$ and $\mZ^{(1)}$ and continuously rotating and translating $\mZ^{(0)}$ into $\mZ^{(1)}$. This can be done using the closed form solution to the rotational Procrustes problem (see \eg,  \cite{zhu20073d,schonemann1966generalized}). For two general codes $\mZ_j$ we use this procedure while adding linearly the residual difference after cancelling the optimal rotation and translations between the codes. In the supplementary we provide the full derivation of this interpolation denoted $\mZ^{t}$, $t\in[0,1]$. Figure \ref{fig:interp} shows the result of decoding the interpolated latent codes $\mZ^t$, $t\in[0,1]$ with the learned decoder. Note that both shape and pose gracefully and naturally change along the interpolation path.

    \begin{figure*}
        \centering
        \includegraphics[width=1.0\textwidth]{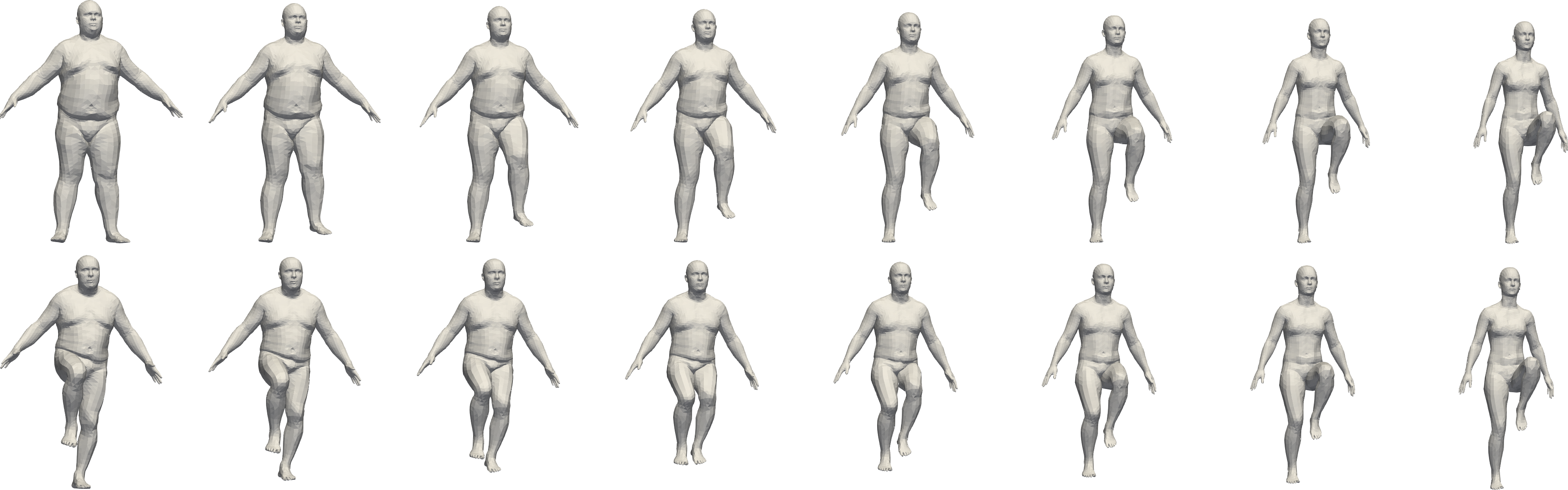}
        \caption{Interpolation in equivariant latent space between two test examples from the "unseen pose" split (leftmost and rightmost columns). \vspace{0pt}}
        \label{fig:interp}
        \vspace{-15pt}
    \end{figure*}
    
    \paragraph{Comparison with implicit pose-conditioned methods.}
    Lastly, we have trained our piecewise Euclidean mesh $\too$ mesh framework on the DFaust subset of AMASS \cite{mahmood2019amass}. Following the protocol defined in \cite{chen2021snarf}, we trained our model on each of the $10$ subjects and tested it on the "within distribution" SMPL \cite{SMPL:2015} generated data from SNARF \cite{chen2021snarf}, as well as on their "out of distribution" test from PosePrior \cite{akhter2015pose} dataset. We use both SNARF \cite{chen2021snarf} and NASA \cite{deng2020nasa} as baselines. Table \ref{tab:all_humans_splits} reports quantitative error metrics: The Intersection over Union (IoU) using sampling within the bounding box (bbox) and near the surface, see supplementary for more details. Figure \ref{fig:geiger} shows comparison with SNARF of test reconstructions from the "out of distribution" set. We note that our setting is somewhat easier than that faced by SNARF and NASA (we perform mesh $\too$ mesh learning using a fixed mesh connectivity and skinning weights; the skinning weights is used by NASA and learned by SNARF).

    Nevertheless, we do not assume or impose anything on the latent space besides Euclidean equivariance, not use an input pose explicitly, and train only with a simple reconstruction loss (see \eqref{e:loss_rec}). Under this disclaimer we note we improve the reconstruction error both qualitatively and quantitatively in comparison to the baselines.

    \begin{figure}[t]
    \centering
         \includegraphics[width=\columnwidth]{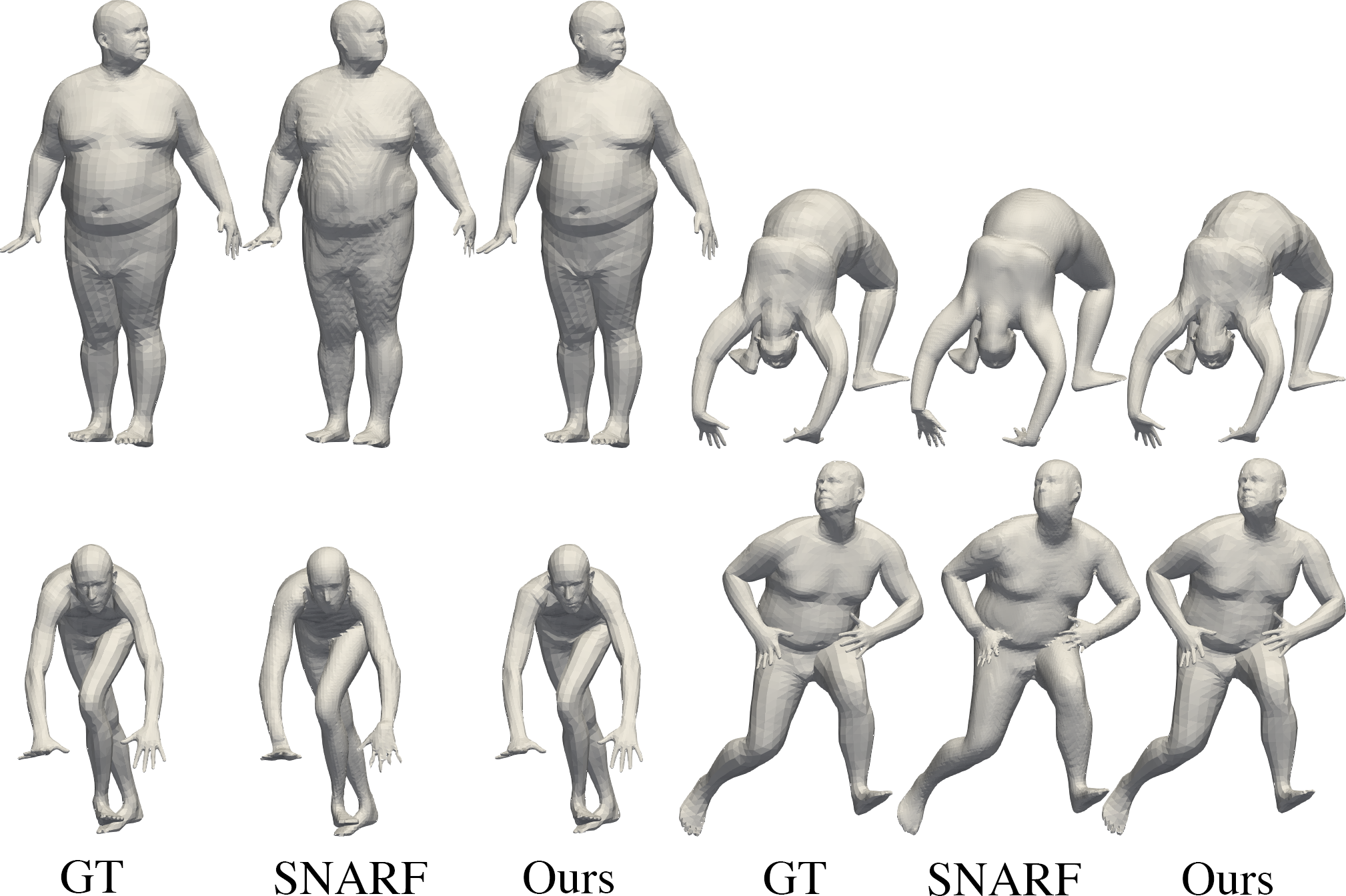}
        \caption{Comparison with SNARF on the "out of distribution" test set.}
        \label{fig:geiger}
        \vspace{-25pt}
    \end{figure}
    \vspace{-6pt}
    \section {Limitations and future work}
    We have introduced a generic methodology for incorporating symmetries by construction into encoder and/or decoders in the context of shape space learning. Using Frame Averaging we showed how to construct expressive yet efficient equivariant autoencoders. We instantiated our framework for the case of global and piecewise Euclidean motions, as well as to mesh $\too$ mesh, and point cloud $\too$ implicit scenarios. We have achieved state of the art quantitative and qualitative results in all experiments. 
    
     Our method has several limitations: First in the mesh $\too$ mesh case we use fixed connectivity and skinning weights. Generalizing the piecewise Euclidean case to implicit representations, dealing with large scale scenes with multiple objects, or learning the skinning weights would be interesting future works. Trying to use linear blend skinning to define group action of $E(3)^k$ could be also interesting. Finally, using this framework to explore other symmetry types, combinations of geometry representations (including images, point clouds, implicits, and meshes), and different architectures could lead to exciting new methods to learn and use shape space in computer vision.  

    {\small
    \bibliographystyle{ieee_fullname}
    \bibliography{egbib}
    }

    \section{Additional Details}
    Here we provide additional details for each experiment in section \ref{sec:exp}. 
    
    \subsection{Global Euclidean Mesh $\too$ mesh.}
    \paragraph{Architecture.}\label{sss:arch_global_mesh} We start by describing in full details the  backbone architectures for both the encoder $\phi$ and decoder $\psi$, for all the methods in table \ref{tab:mesh_global_rigid}. Then, we describe the relevant hyperparmaeters for our Frame Averaging version. Note that the backbone architectures are similar to the ones in ~\cite{huang2021arapreg}, and are described here again for completeness. The network consists of the following layers:
    \begin{align*}
    \mathrm{Conv}(n,d_{\text{in}},d_{\text{out}}) &:\Real ^ {n \times d_\text{in}} \too \Real ^{ n \times d_{\text{out}}} \\
    \mathrm{Pool}(n,n') &: \Real^{n \times d} \too  \Real ^ {n' \times d} \\
    \mathrm{Linear}(d_{\text{in}},d_{\text{out}}) &:\Real ^{d_{\text{in}}} \too \Real ^{d_{\text{out}}}.
    \end{align*}
    The $\mathrm{Conv}(n,d_{\text{in}},d_{\text{out}})$ is the Chebyshev spectral graph convolutional operator \cite{defferrard2016convolutional} as implemented in \cite{ranjan2018generating,huang2021arapreg}, where $n$ denotes the number of vertices, and $d_{\text{in}}$, $d_{\text{out}}$ are the layer's input and output features dimensions respectively. All the $\mathrm{Conv}(n,d_{\text{in}},d_{\text{out}})$ layers have $6$ Chebyshev polynomials. The $\mathrm{Pool}(n,n')$ layer is based on the mesh down/up sampling schemes defined in \cite{ranjan2018generating}, where we used the same mesh samples as provided in \cite{huang2021arapreg} official implementation. The $n$, $n'$ are the number of vertices in the input and the output, respectively. The linear layer is defined by $\mX' = \mW \mX + \vb $, where $\mW \in \Real ^{d_{\text{out}} \times d_{\text{in}}}$ and $\vb \in \Real ^{d_{\text{out}}}$ are learnable parameters.  
    
    Moreover, the backbone architecture consists of the following blocks:
    \begin{flalign*}
     &\mathrm{EnBlock}(n,n',d_{\text{in}},d_{\text{out}}):\mX \in \Real ^ {n \times d_\text{in}} \mapsto \mX' \in \Real ^{ n \times d_{\text{out}}} \\
    \mX' &= \mathrm{Pool}(n,n',d_{\text{out}})\left(\sigma \left(\mathrm{Conv}(n,d_{\text{in}},d_{\text{out}}) \right)\right) \\
    &\mathrm{DeBlock}(n, n',d_{\text{in}},d_{\text{out}}):\mX \in \Real ^ {n \times d_\text{in}} \mapsto \mX' \in \Real ^{ n \times d_{\text{out}}} \\
    \mX' &= \sigma \left(\mathrm{Conv}(n',d_{\text{in}},d_{\text{out}})\left( \mathrm{Pool}(n,n',d_{\text{in}})(\mX)\right) \right)
    \end{flalign*}
    where $\sigma$ is the $\mathrm{ELU}$ activation function. 
    Then, the encoder $\phi: \Real^{6890 \times 3} \too \Real^{72}$ consists of the following blocks:
    \begin{align*}
    &\mathrm{EnBlock}(6890,3445,3,32) \rightarrow \\
    &\mathrm{EnBlock}(3445,1723,32,32)  \rightarrow \\
    &\mathrm{EnBlock}(1723,862,32,32) \rightarrow \\
    &\mathrm{EnBlock}(862,431,32,64) \rightarrow \mathrm{Linear}(27584,72)  \rightarrow.
    \end{align*}
    
    Similarly, the decoder $\psi : \Real ^ {72} \too \Real ^{n \times 3}$ is consisted of the following blocks:
    \begin{align*}
    &\mathrm{Linear}(72,27584)  \rightarrow \mathrm{DeBlock}(431,862,64,64) \rightarrow \\ 
    &\mathrm{DeBlock}(862,1723,64,32) \rightarrow \\
    &\mathrm{DeBlock}(1723,3445,32,32) \rightarrow \\
    &\mathrm{DeBlock}(3445,6890,32,32) \rightarrow \\
    &\mathrm{Conv}(6890,6890,32,3)  \rightarrow.
    \end{align*}
    
    For the Frame Averaging versions of these backbone architectures, $\Phi=\ip{\phi}_{\gF}$, and $\Psi=\ip{\psi}_{\gF}$, we set $m=0$ and $d=24$. For the frames construction of $\Phi$ and $\Psi$, $\gF:V\too 2^{E(3)}\setminus \emptyset$, we use the construction defined in section \ref{ss:frame_averaging}, with $\vw=\one$. Note that for $\Phi$ the frames are calculated with respect to the input vertices $\mV = \mX \in \Real ^{6890 \times 3}$, and for $\Psi$ the frames are calculated with respect to $\mV = \mZ \in \Real^{24 \times 3}$.
    
    \paragraph{Implementation details.}
    For all methods in table \ref{tab:mesh_global_rigid}, training was done using the \textsc{Adam} \cite{kingma2014adam} optimizer, with batch size $64$. The number of epochs was set to $150$, with $0.0001$ fixed learning rate. Training was done on $2$ Nvidia V-100 GPUs.
    
    \paragraph{Evaluation.}
    The quantitative measure reported in table \ref{tab:mesh_global_rigid}, the average per-vertex distance error (MSE), is based on the official implementation of \cite{huang2021arapreg}. Given a collection of $N$ test examples $\left\{\mX_i\right\}_{i=1}^N$, and corresponding predictions $\left\{\mY_i\right\}_{i=1}^N$, the metric is defined by 
    \begin{equation}\label{eq:mse}
    \text{MSE} = \frac{1}{NM} \sum_{i=1}^N \sum_{j=1}^M \norm{\mX_{ij} - \mY_{ij}},
    \end{equation}
    where $M$ is the number of vertices.
    
    \subsection{Global Euclidean: Point cloud $\too$ implicit}
    \paragraph{Architecture.}
    In this experiment, the backbone encoder architecture is the same for the baseline VAE and our Frame Averaging version. The network is based on the design and implementation of OccupencyNetworks \cite{mescheder2019occupancy}. The encoder network consists of layers of the form 
    \begin{align*}
    \mathrm{FC}(d_{\text{in}},d_{\text{out}})&:\mX \in \Real^{n \times d_{\text{in}}} \mapsto \nu\parr{\mX \mW + \one \vb^T } \\
    \mathrm{MaxPool} &: \mX \in \Real^{n \times d} \mapsto  [\max{\mX \ve_i}] \\
    \mathrm{Linear}(d_{\text{in}},d_{\text{out}})&:\mV \in \Real^{d_{\text{in}}} \mapsto \mW' \mV +  \vb'
    \end{align*}
    where $\mW\in \Real^{d_\text{in}\times d_\text{out}}$, $\mW'\in \Real^{d_\text{out}\times d_\text{in}}$, $\vb\in\Real^{d_\text{out}}$, $\vb'\in\Real^{d_\text{out}}$ are the learnable parameters, $\one\in\Real^n$ is the vector of all ones, $[\cdot]$ is the concatenation operator, $\ve_i$ is the standard basis in $\Real^{d}$, and $\nu$ is the $\mathrm{ReLU}$ activation. Then, we define the following block:
    \begin{align*}
    &\mathrm{EnBlock}(d_{\text{in}},d_{\text{out}}):\mX \mapsto \\ &[\mathrm{FC}(d_{\text{in}},d_{\text{out}}/2)(\mX),\one \mathrm{MaxPool}(\mathrm{FC}(d_{\text{in}},d_{\text{out}}/2)(\mX)],
    \end{align*}
    and the encoder $\phi$ consists of the following blocks:
    \begin{align*}
    & \mathrm{FC}(3,510) \rightarrow \mathrm{EnBlock}(510,510)  \rightarrow \\
    &\mathrm{EnBlock}(510,510) \rightarrow  \mathrm{EnBlock}(510,510) \rightarrow  \\
    & \mathrm{FC}(510,255) \rightarrow \mathrm{MaxPool}  \stackrel{\times 2}{\rightarrow} \mathrm{Linear}(255,255),
    \end{align*}
    where the final two linear layers outputs vectors $(\vmu,\veta)$, with $\vmu\in\Real^{255}$ is the latent mean, and $\veta\in\Real^{255}$ is the latent log-standard-deviation.
    
    The decoder, $\psi$, similarly to \cite{atzmon2020sald,park2019deepsdf}, is a composition of $8$ layers where the first layer is $\mathrm{FC}(255 + 3,512)$, the second and third layer are $\mathrm{FC}(512,512)$, the fourth layer is a concatenation of $\mathrm{FC}(512,255)$ to the input, the fifth to seventh layers are $\mathrm{FC}(512,512)$, and the final layer is $\mathrm{Linear}(512,1)$.
    For the Frame Averaging architecture of $\Phi$ and $\Psi$, we use the same as the above backbone encoder $\phi$ and decoder $\psi$. We set $m=0$ and $d=85$. For the frames construction of $\Phi$ and $\Psi$, $\gF:V\too 2^{E(3)}\setminus \emptyset$, we use the construction defined in section \ref{ss:frame_averaging}, with $\vw=\one$. Note that for $\Phi$ the frames are calculated with respect to the input point cloud $\mV = \mX \in \Real ^{6400 \times 3}$, and for $\Psi$ the frames are calculated with respect to $\mV = \mZ \in \Real^{85 \times 3}$.
    
    \paragraph{Implementation details.}
    For all methods in table \ref{tab:common3D}, training was done using the \textsc{Adam} \cite{kingma2014adam} optimizer, with batch size $64$. The number of epochs was set to $4000$, with $0.0001$ fixed learning rate. Training was done on $4$ Nvidia QUADRO RTX 8000 GPUs.
    
    \paragraph{Evaluation.}
    We used the following Chamfer distance metrics to measure similarity between shapes:
    \begin{align} \label{e:CD}
    \dist_{\text{C}}\left(\gX_{1},\gX_{2} \right) & = \frac{1}{2}\left(\dist_{\text{C}}^{\rightarrow}\left(\gX_{1},\gX_{2} \right) + \dist_{\text{C}}^{\rightarrow}\left(\gX_{2},\gX_{1} \right) \right)
    \end{align}
    where 
    \begin{align}\label{e:one_sided_CD}
    \dist_{\text{C}}^{\rightarrow}\left(\gX_{1},\gX_{2} \right) & = \frac{1}{\abs{\gX_1}}\sum_{\vx_{1}\in\gX_{1}}\min_{\vx_2\in \gX_{2}}\norm{\vx_1-\vx_2}^2
    \end{align}
    and the sets $\gX_{i}$ are point clouds. As the input test set are given as 3D point clouds, we use the test point clouds directly with this metric. However, as the reconstructions are represented as implicit surfaces, we first mesh the predictions zero levelset using the $\mathrm{Marching Cubes}$ \cite{lorensen1987marching} algorithm, and second, we generate a point cloud of $30000$ points by uniformly sample from the predicted mesh. 
    
    \subsection{Piecewise Euclidean: Mesh $\too$ mesh}
    \paragraph{Architecture.}
    Here we provide additional details regarding the architectures for the methods in table \ref{tab:normal} applied on the DFaust splits. The AE architecture is the same as the one detailed in section \ref{sss:arch_global_mesh} for the global Euclidean mesh $\too$ mesh. For ARAPReg \cite{huang2021arapreg}, random split, we used the trained models provided with the official implementation. Note that ARAPReg is an auto-decoder, where in test time, latent codes are optimized to reconstruct test inputs. For the Frame Averaging version with the above encoder and decoder backbones, we describe next the design choices that have been made. We set the weights $\mW \in \Real ^{6890 \times 24}$ according to the template skinning weights provided with SMPL \cite{SMPL:2015}. For the random split, the latent space dimensions were set to $k=24$, $m=12$, $d=20$. For both the unseen poses splits the latent space dimensions were set to $k=24$, $m=72$, $d=24$. For the frames of $\Phi$,  $\gF_i:V\too 2^{E(3)}\setminus \emptyset$, with $V=\Real^{6890 \times 3}$ and $1 \leq i \leq 24$, we use the construction defined in section \ref{ss:frame_averaging}. We set the weights $\vw$ used for the weighted PCA of the $i$-th frame with  $\mW \ve_i$, where $\mW$ is the template skinning weights and $\ve_i$ is the $i$-th element in the standard basis of $\Real^{24}$. The frames of $\ip{\psi}_{\gF}$, $\gF:V\too 2^{E(3)}\setminus \emptyset$ (where $V=\Real^{20 \times 3}$), are calculated with $\vw=\one$. Note that exactly the same Frame Averaging architecture was also used in the experiment in table \ref{tab:all_humans_splits}.
    
    For the experiment with SMAL in table \ref{tab:normal}, we next provide the baseline AE architecture based on the one provided in ARAPReg. The encoder, $\phi$, is consisted of the following blocks, where each block is defined as in section \ref{sss:arch_global_mesh}:
    \begin{align*}
    &\mathrm{EnBlock}(3889,1945,3,32) \rightarrow \\
    &\mathrm{EnBlock}(1945,973,32,32)  \rightarrow \\
    &\mathrm{EnBlock}(973,973,32,32) \rightarrow \\
    &\mathrm{EnBlock}(973,973,32,64) \rightarrow \mathrm{Linear}(62272,96)  \rightarrow.    
    \end{align*}
    The decoder, $\psi$, consists of the following blocks:
    \begin{align*}
    &\mathrm{Linear}(96,62272) \rightarrow \mathrm{DeBlock}(973,973,64,64)  \rightarrow \\
    &\mathrm{DeBlock}(973,973,64,32) \rightarrow \\
    &\mathrm{DeBlock}(973,1945,32,32) \rightarrow \\
    &\mathrm{DeBlock}(1945,3889,32,32) \rightarrow \\
    &\mathrm{Conv}(3889,3889,32,3) \rightarrow.    
    \end{align*}
    For our Frame Averaging architecture we set $k=33$, $m=12$, $d=28$. The weights $\mW \in \Real^{3889 \times 33}$ are set according to the template skinning weights provided with the dataset. The frames for the encoder and the decoder are constructed in the same fashion as the one described above for the Frame Averaging DFaust architecture. 
    \paragraph{Implementation details.}
    For our Frame Averaging method in tables \ref{tab:normal} and \ref{tab:all_humans_splits}, training was done using the \textsc{Adam} \cite{kingma2014adam} optimizer, with batch size $16$. The number of epochs was set to $100$, with $0.0001$ fixed learning rate. Training was done on $4$ Nvidia QUADRO RTX 8000 GPUs.
    For ARAPReg, the numbers reported for the random split and SMAL in table \ref{tab:normal} are from the original paper \cite{huang2021arapreg}. For the unseen pose and unseen global pose splits in table \ref{tab:normal}, we retrained the ARAPReg and AE using the official implementation training details, where the only change is that the latent size was set to $144$ from $72$ (to be equal to ours).
    The numbers reported in table \ref{tab:all_humans_splits} for the baselines SNARF\cite{chen2021snarf} and NASA \cite{deng2020nasa} are from table 1 in \cite{chen2021snarf}. The qualitative results for SNARF in figure \ref{fig:geiger} were produced using SNARF official implementation. Note that the meshes for the qualitative results in the original SNARF paper were extracted using a \emph{Multiresolution IsoSurface Extraction} \cite{mescheder2019occupancy}, whereas the meshing in the official SNARF implementation (used in this paper) is based on the standard $\mathrm{MarchingCubes}$ \cite{lorensen1987marching} algorithm, result in with more severe staircase artifacts. 
    
    \paragraph{Evaluation.}
    The definition for the MSE metric reported in \ref{tab:normal} is the same as defined above in equation \ref{eq:mse}. The numbers reported for our method in table \ref{tab:normal}, were produced with the evaluation code from the official implementation of SNARF. For completeness, we repeat the procedure for the calculation of the IoU metric here: Let $\mX$ be a ground truth shape and $\mY$ is the corresponding prediction. Then, two samples of points in space, bbox (bounding box) and near surface, are constructed. The near surface sample is constructed by adding a Gaussian noise with $\sigma=0.01$ to a sample of $200000$ points from the shape $\mX$. The bbox is constructed by uniformly sampling of $100000$ points in a the bounding box of all shapes in the dataset. Then, the IoU is computed by 
    $$
        \text{IoU} = \frac{1}{\vert S \vert} \sum _{\vx \in S} o(\vx;\mX) * o(\vx;\mY) 
    $$
    where $S$ is the prepared sample (bbox or near surface), and $o(\cdot;\mX) \in \left\{0,1\right\}$, $o(\cdot;\mY) \in \left\{0,1\right\}$ are the occupancy functions of the shapes $\mX$ and $\mY$.
    
    
    \subsection{Interpolation in shape space}
    Here we provide additional details regarding the interpolation in shape space. Let $\mZ^{(j)}=(\vq^{(j)},\mQ^{(j)})\in Z$, $j=0,1$, be two latent codes. Here we describe the case $k=1$, as for $k>1$ the following scheme can be applied in the same manner for each part. Note that $\vq^{(j)}$ is the invariant features, whereas $\mQ^{(j)}$ is the equivariant features. Let $\mR$ be the optimal rotation between $\hat{\mQ}^{(0)}$ to $\hat{\mQ}^{(1)}$, where $\hat{\mQ}^{(j)}$ denotes the centered version of $\mQ^{(j)}$. Then,
    \begin{align*}
    \mQ_{t} = &\mathrm{slerp}(t,\mI,\mR)(\hat{\mQ}^{(0)} + t \mD) + \\
    &t \frac{1}{d} \one \one^T \mQ^{(0)} + (1-t) \frac{1}{d} \one \one^T \mQ^{(1)}
    \end{align*}
    where $\mD = \mR^T \mQ^{(1)} - \mQ^{(0)}$, and $\mathrm{slerp}(t,\mI,\mR)$ denotes the spherical liner interpolation between $\mI$ to $\mR$.
    For the invariant features, we perform regular linear interpolation $\vq_t = t \vq^{(0)} + (1-t)\vq^{(1)} $. Then, the interpolant at time $t$ is 
    $$
    \mZ_t = (\vq_t,\mQ_t).
    $$

    \subsection{Timings}
    \begin{figure}[t]
        \centering
        \includegraphics[width=\columnwidth]{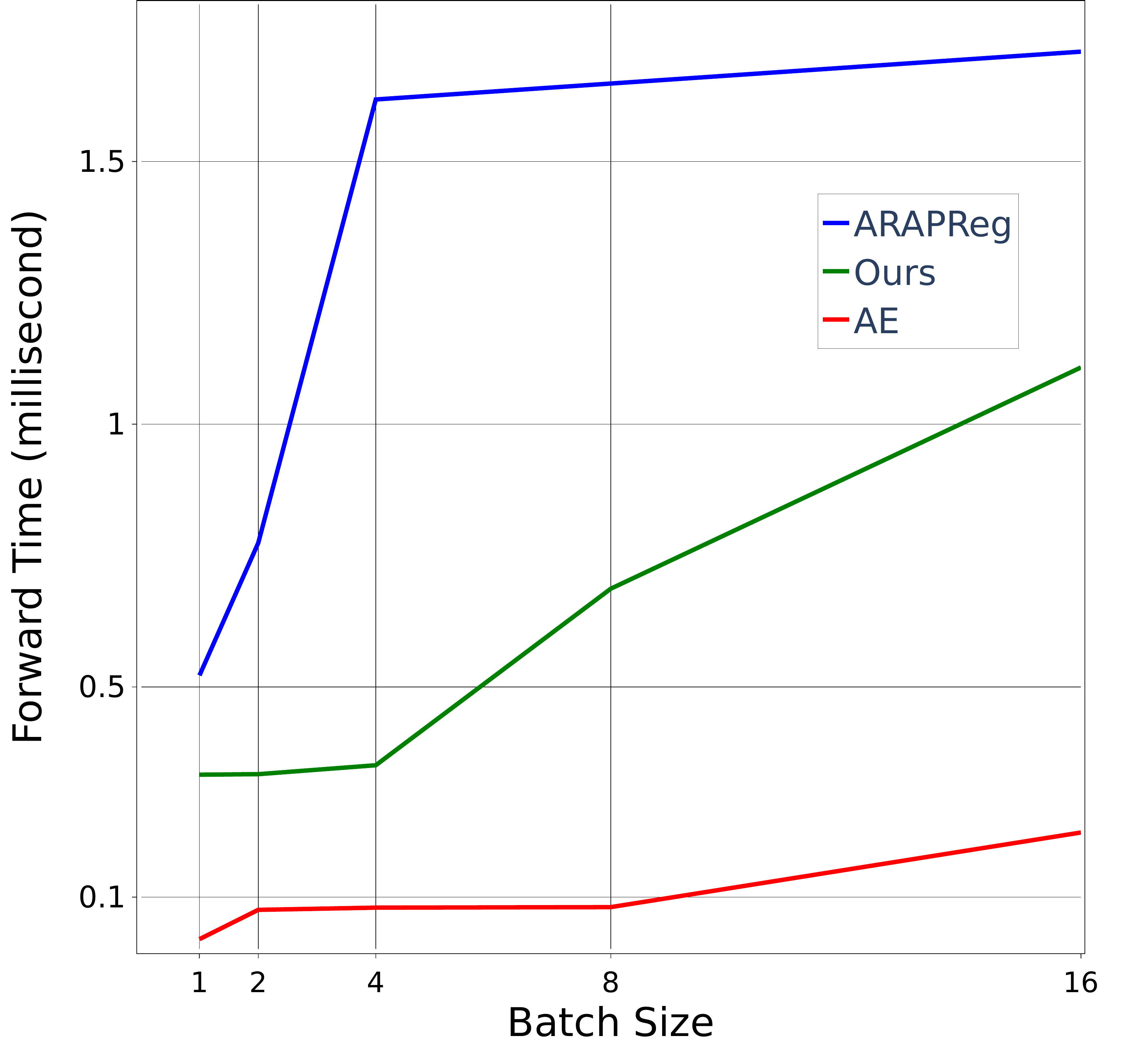}
        \caption{Timings for the methods in table \ref{tab:normal} for various batch sizes. \vspace{0pt}}
        \label{fig:timings}
        
    \end{figure}
    
    In figure \ref{fig:timings} we report the training forward time for various batch sizes for the methods in table \ref{tab:normal}. Note that the increase in forward time for our method with respect to the baseline AE, stems from the computation of the backbone network for all the different parts and possible elements in the frame. Note that for ARAPReg, the forward time in test reduces significantly as the computation of the ARAP regularizer is not needed. However, note that in the case of ARAReg with an AutoDecoder, prediction time increases significantly as it requires optimizing the latent codes.

    \section{Proofs}
    
    \begin{proof}[Proof of Proposition \ref{prop:frame_equi}]
    The proof basically repeats the proof in \cite{puny2021frame} but for the weighted case. Let $g = (\mS,\vu)$ be an element in $E(3)$. We need to show that $\gF(\rho_1(g) \mV) = g \gF(\mV)$. The above equality for the translations part $\Real^3$  in the group $E(3) = O(3)\ltimes \Real^3$ is trivial. For showing the above equality for $O(3)$, Let $\mC_{\mV}$ be the covariance matrix of $\mV$. That is $\mC_{\mV} =  (\mV-\frac{\one \vw^T }{\one ^T \vw}\mV)^T\mathrm{diag}(\vw)(\mV-\frac{\one \vw^T}{\one ^T \vw}\mV)$. A direct calculation shows that $$\mC_{\mV} = \mV^T \left[ \mathrm{diag}(\vw) \left(\mI - \frac{\one \vw ^T}{\one ^T \vw} \right) \right] \mV. $$ Then, the covariance matrix of $\rho_1(g)\mV$ satisfies 
    $$
    \mC_{\rho_1(g) \mV} = \mS^T \mV^T \left[ \mathrm{diag}(\vw) \left(\mI - \frac{\one \vw ^T}{\one ^T \vw} \right) \right] \mV \mS.
    $$
    Thus, $\vr$ is an eigen vector of $\mC_{\mV}$ if and only if $\mS\vr$ is an eigen vector of $\mC_{\rho_1(g) \mV}$.
    To finish the proof, notice that by definition $g \gF(\mV) = \set{(\mS \mR,\mS \vt + \vu)\ \vert \ \mR=\brac{\pm\vr_1,\pm\vr_2,\pm\vr_3}}$.
    
    \end{proof}
    
    \begin{proof}[Proof of Proposition \ref{lem:Psi_hat_Psi}] We need to show that 
    \begin{align*}
        \Psi(\rho_Z(g)\mZ) = \rho_Y(g)\Psi(\mZ)
    \end{align*}
    which is equivalent to showing that for all $\vx\in\Real^3$
    \begin{align*}
        \hat{\Psi}(\rho_Z(g)\mZ,\vx)=\hat{\Psi}(\mZ,\mR^T(\vx-\vt))
    \end{align*}
    which is again equivalent to showing that for all $\vx\in\Real^3$
    \begin{align*}
        \hat{\Psi}(\rho_Z(g)\mZ,\mR\vx+\vt)=\hat{\Psi}(\mZ,\vx)
    \end{align*}
    which by definition of $\rho_V$ is finally equivalent to showing for all $\vx\in\Real^3$ 
    \begin{align*}
        \hat{\Psi}(\rho_P(g)(\mZ,\vx))= \rho_\Real(g)\hat{\Psi}(\mZ,\vx)
    \end{align*}
    as required.
    \end{proof}
    
    \begin{proof}[Proof of Theorem \ref{thm:part_equi}.]
    If $g\in \gF_j(\mX)$ then $g=(\mR,\vt)$, where $\vt=(\one^T \vw_j)^{-1}\mX^T\vw_j$ as defined in \eqref{e:t}. Therefore
    \begin{align*}
        \rho_X(g)^{-1}\mX_j &= ((\one-\vw_j) \vt^T+\vw_j\odot\mX - \one\vt^T)\mR \\
        &= \vw_j \odot (\mX-\one \vt^T)\mR \\
        &= \vw_j \odot \rho_X(g)^{-1}\mX
    \end{align*}
    Furthermore, it can be directly checked that for hard weights $\gF_j(\mX)$ is an equivariant frame that is defined only in terms of the rows of $\mX$ that belong to the $j$-th part. Now, $\mZ_j=\ip{\phi}_{\gF_j}(\mX_j)$ where 
    \begin{align*}
        &\ip{\phi}_{\gF_j}(\mX_j) = \frac{1}{|\gF_j(\mX)|}\sum_{g\in \gF_j(\mX)}\rho_{Z_j}(g)\phi(\rho_X(g)^{-1}\mX_j) \\
        &= \frac{1}{|\gF_j(\mX)|}\sum_{g\in \gF_j(\mX)}\rho_{Z_j}(g)\phi(\vw_j\odot \rho_X(g)^{-1}\mX).
    \end{align*}
    Therefore Frame Averaging now implies that $\ip{\mX_j}_{\gF}$ is equivariant to Euclidean motions of the $j$-th part. For the decoder, $\ip{\mZ_j}_{\gF}$ is equivariant to Euclidean motions of $\mZ_j$ by Frame Averaging, and $\vw_j\odot \rho_Y(g_j)\mY$ transforms the decoded $j$-th part accordingly. 
    \end{proof}
    
    \section{Additional results}
    
    We provide additional qualitative results for some of the experiments in section \ref{sec:exp}. Figure \ref{fig:geiger_more} shows additional qualitative results for the comparison with an implicit pose-conditioned method (SNARF).
    Figure \ref{fig:smal} shows typical test reconstructions on the SMAL dataset. Note that the test set here does not include out of distribution poses, hence relatively easy. Note that our method is able to capture the pose better than the baselines (see the animal's tail for example), while achieving high fidelity (see base of the feet). 
    
    \begin{figure*}[h]
        \centering
        \includegraphics[width=1.0\textwidth]{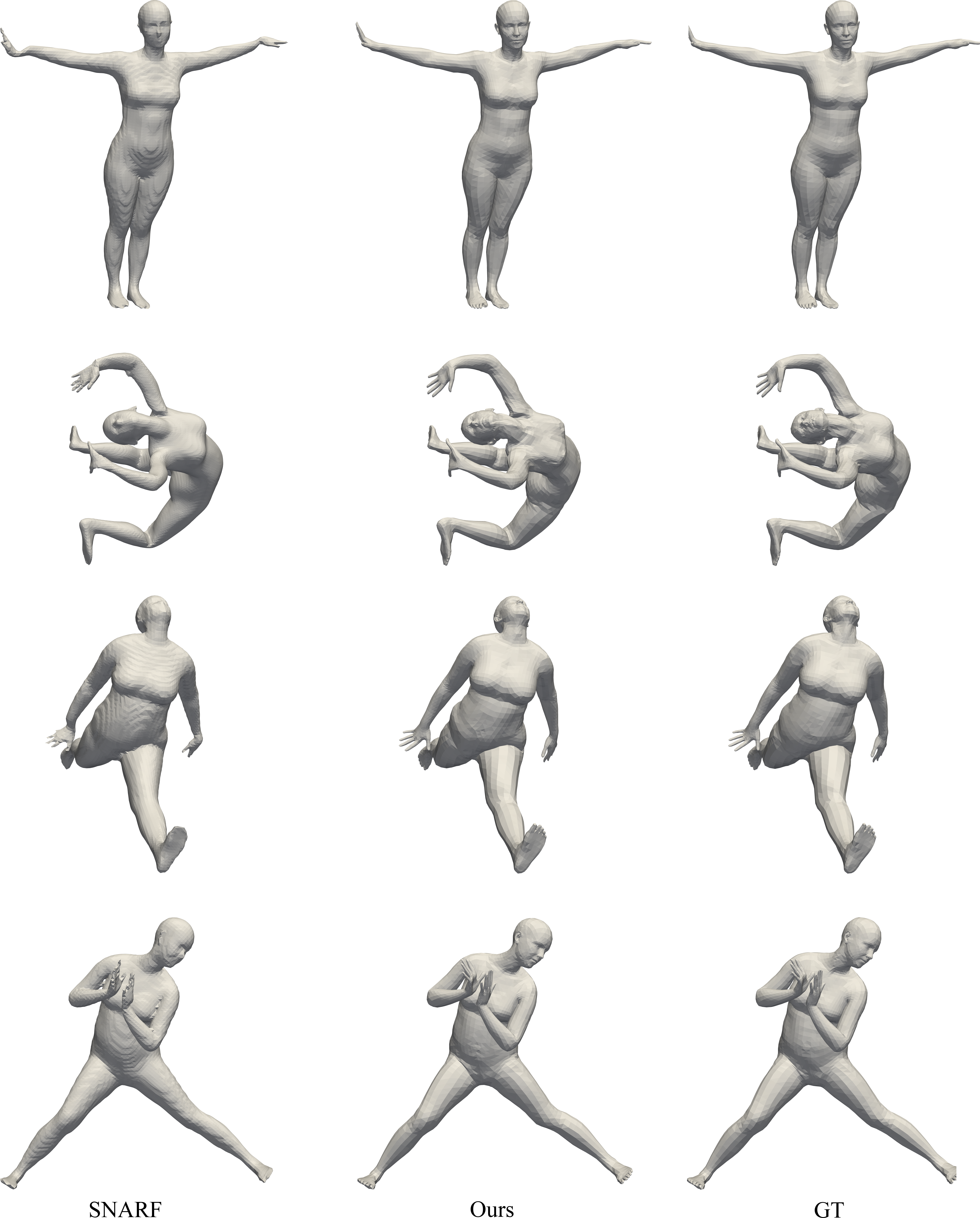}
        \caption{Comparison to SNARF on "out of distribution" test.}
        \label{fig:geiger_more}
    \end{figure*}

    \begin{figure*}[h]
        \centering
        \includegraphics[width=1.0\textwidth]{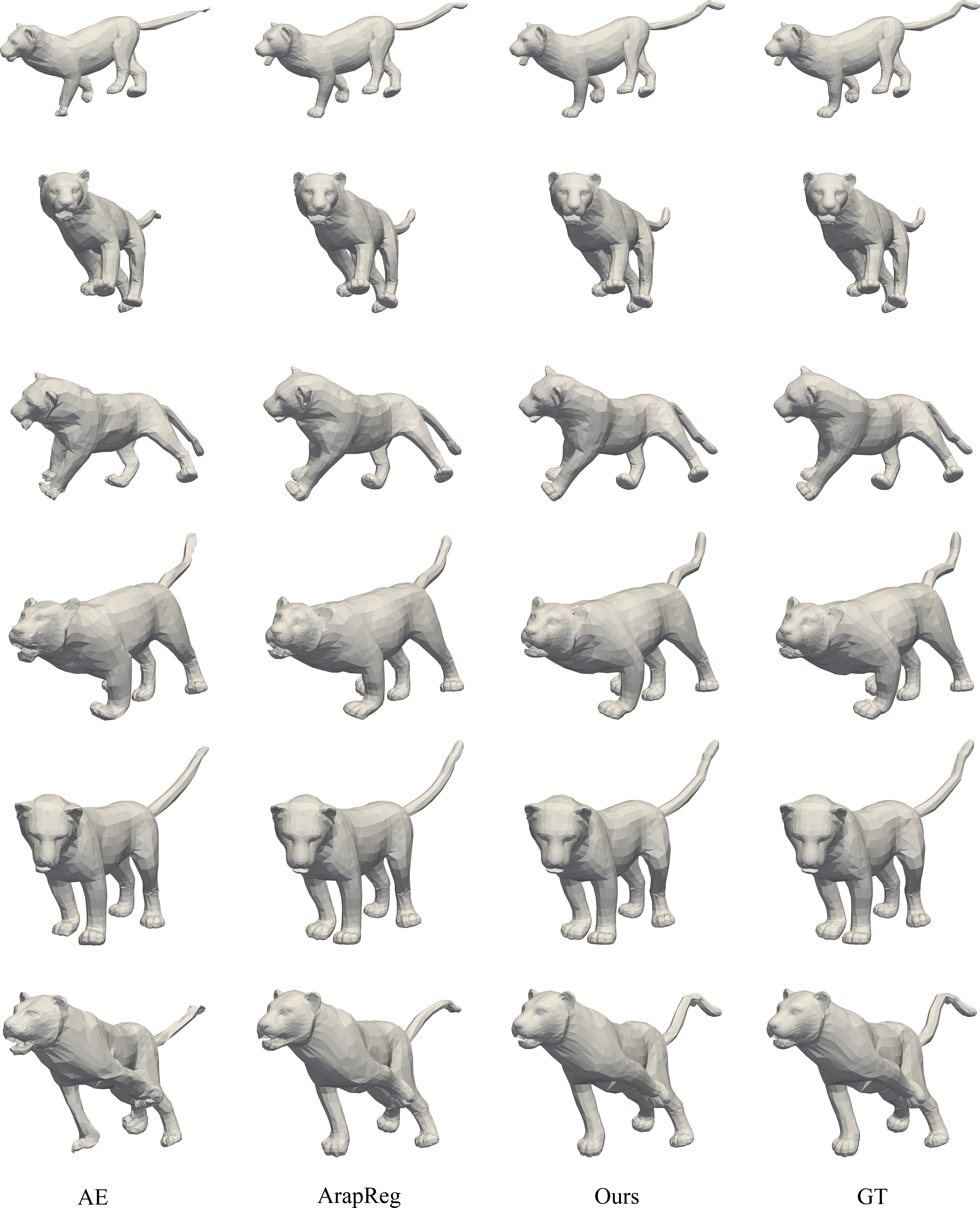}
        \caption{Piecewise Euclidean mesh $\too$ mesh, qualitative results; SMAL \cite{zuffi20173d} dataset.}
        \label{fig:smal}
    \end{figure*}

\end{document}